\newif\ifICML
\newcommand{\reals}{\mathbb{R}}
\newcommand{\ww}{\mathbf{w}}
\newcommand{\xx}{\mathbf{x}}
\newcommand{\vv}{\mathbf{v}}
\newcommand{\uu}{\mathbf{u}}
\newcommand{\oo}{\mathbf{o}}
\renewcommand{\ss}{\mathbf{s}}
\newcommand{\im}{\mathrm{Im}}
\newcommand{\rank}{\mathrm{rank}}
\newcommand{\epg}{$\epsilon$-good }
\newcommand{\secref}[1]{Section \ref{#1}}
\newcommand{\defref}[1]{Definition \ref{#1}}
\newcommand{\coref}[1]{Corollary \ref{#1}}
\newcommand{\claimref}[1]{Claim \ref{#1}}
\newtheorem{lemma}{Lemma}
\newtheorem{theorem}{Theorem}
\newtheorem{corollary}{Corollary}
\newtheorem{definition} {Definition}
\newtheorem{claim} {Claim}
\newtheorem{assumption}{Assumption}
\newcommand{\probref}[1]{Problem \ref{#1}}
\newcommand{\sampx}[1]{\xx^{(#1)}}
\newcommand{\sampo}[1]{\oo^{(#1)}}
\newcommand{\sampox}[1]{\xx_{\oo^{#1}}^{#1}}
\newcommand{\asref}[1]{Assumption \ref{#1}}
\newcommand{\thmref}[1]{Theorem \ref{#1}}
\newcommand{\algref}[1]{Alg. \ref{#1}}
\newcommand{\lemref}[1]{Lemma \ref{#1}}
\renewcommand{\eqref}[1]{Eq. \ref{#1}}
\newcommand{\ignore}[1]{}
\newcommand{\obj}[2]{ & \underset{#1}{\text{minimize}}& & #2 \\}
\newcommand{\st}[1]{ &  \text{subject to:}& & #1}
\newenvironment{program}[1]{\begin{equation*}\tag{#1}\begin{aligned}}{.\end{aligned}\end{equation*}}
\newenvironment{Ouralgorithm}[1][\  ] %
{
\rm
\begin{tabbing}
.\=...\=...\=...\=...\=  \+ \kill
} %
{\end{tabbing}
}
\icmltitlerunning{Classification with Low-Rank and Missing Data}
\title{Classification with Low-Rank and Missing Data}
\begin{document}

\ifICML
\onecolumn
\icmltitle{Classification with Low Rank and Missing Data}
\else
\title{Classification with Low Rank and Missing Data}
\date{}
\fi

\ifICML
\icmlauthor{Elad Hazan \and  Roi Livni \and  Yishay Mansour}{email@yourdomain.edu}
\icmladdress{Your Fantastic Institute,
            314159 Pi St., Palo Alto, CA 94306 USA}
\icmlauthor{Your CoAuthor's Name}{email@coauthordomain.edu}
\icmladdress{Their Fantastic Institute,
            27182 Exp St., Toronto, ON M6H 2T1 CANADA}
\else
\author{Elad Hazan \thanks{Princeton University and Microsoft Research} 
\and  Roi Livni \thanks{Hebrew University and Microsoft Research} 
 \and  Yishay Mansour  \thanks{Tel Aviv University and Microsoft Research}  }
\maketitle
\fi

\ifICML
\icmlkeywords{boring formatting information, machine learning, ICML}
\fi

\vskip 0.3in

\begin{abstract}
We consider classification and regression tasks where we have missing data and assume that the (clean) data resides in
a low rank subspace. Finding a hidden subspace is known to be computationally hard.
Nevertheless, using a non-proper formulation we give an efficient agnostic algorithm that classifies as good as the best linear classifier coupled with the best low-dimensional subspace in which the data resides. A direct implication is that our algorithm can linearly (and non-linearly through kernels) classify provably as well as the best classifier that has access to the full data. 
\end{abstract}

\section{Introduction}\label{sec:intro}


The importance of handling correctly missing data is a fundamental
and classical challenge in machine learning.
 There are many reasons why data might be missing.
 For example, consider the medical domain, some data might be missing because certain
 procedures were not performed on a given patient,
 other data might be missing because the patient choose not to disclose them,
 and even some data might be missing due to malfunction of certain equipment.
While it is definitely much better to have always complete and accurate data,
this utopian desire is not the reality many times. For this reason we need to utilize
the available data even if some of it is missing.


Another, very different motivation for missing data are recommendations.
%
%
For example, a movie recommendations dataset  might have users opinions on certain movies, which
is the case, for example, in the Netflix motion picture dataset.
Clearly, no user has seen or reviewed all movies, or even close to it.
In this respect recommendation data is an extreme case: the vast majority is usually missing
(i.e., it is sparse to the extreme).
%

Many times we can solve the missing data problem since the data resides on a lower dimension manifold.
In the above examples, if there are prototypical users (or patients) and any user is a mixture
of the prototypical users, then this implicitly
suggests that the data is {\em low rank}. Another way to formalize this assumption is to consider the data in a matrix form, say, the users are rows and movies are columns, then our assumption is that the true
complete matrix has a low rank.


%


Our starting point is to consider the low rank assumption, but to avoid any explicit matrix completion, and
instead directly dive in to the classification problem. At the end of the introduction we show that matrix completion is neither sufficient and/or necessary.

We consider perhaps the most fundamental data analysis technique of the machine learning toolkit: linear (and kernel) classification, as applied to data where some (or even most) of the attributes in an example might be missing.  {\bf Our main result is an efficient algorithm for linear and kernel classification that performs as well as the best classifier that has access to all data}, under low rank assumption with natural non-degeneracy conditions.

We stress that our result is worst case, we do not assume that the missing data follows any probabilistic rule other than the underlying matrix having low rank. This is a clear contrast to most existing matrix completion algorithms.
We also cast our results in a distributional setting, showing that the classification
error that we achieve is close to the best classification using the subspace of the examples (and with no missing data). Notably, many variants of the  problem of finding a hidden subspace are computationally hard (see e.g. \cite{BerRig13}), yet as we show, learning a linear classifier on a hidden subspace is non-properly learnable.

At a high level, we assume that a sample is a triplet $(\xx,\oo,y)$, where $\xx\in \mathbb{R}^d$ is the complete example, $\oo\subset \{1,\ldots,d\}$ is the set of observable attributes and $y\in\mathcal{Y}$ is the label.
The learner observes only $(\xx_o, y)$, where $\xx_o$ omits any attribute not in $\oo$.
Our goal is given a sample $S=\{(\xx_o^{(i)},y^{(i)})\}_{i=1}^m$ to output
a classifier $h_S$ such that w.h.p.:
\begin{equation*}
 \mathop\mathbb{E}\left[\ell (h_S(\xx_\oo),y)\right] \le \min_{\stackrel{\ww\in \mathbb{R}^d}{\|\ww\|\le 1}}\mathop\mathbb{E}\left[ \ell ( \ww\cdot \xx,y)\right] +\epsilon ,
\end{equation*}
where $\ell$ is the loss function. Namely, we like our classifier $h_S$ to compete with the best
linear classifier for the completely observable data.

Our main result is achieving this task (under mild regularity conditions)
using a computationally efficient algorithm for any convex Lipschitz-bounded loss function.
Our basic result requires a sample size which is quasi-polynomial, but we complement it
with a kernel construction which can guarantee efficient learning under appropriate large margin assumptions. Our kernel depends only on the intersection of observable values of two inputs,
and is efficiently computable.
(We give a more detailed overview of our main results in Section \ref{sec:prob}.)

\ifICML
We complement our theoretical contributions with experimental findings that show superior classification performance both on synthetic data and on publicly-available recommendation data.
\else
Preliminary experimental evidence indicates our  theoretical contributions lead  to  promising classification performance both on synthetic data and on publicly-available recommendation data. This will be detailed in the full version of this paper. 
\fi



\paragraph{Previous work.}

Classification with missing data is
a well studied subject in statistics with numerous books and papers devoted to its study, (see, e.g., \cite{LittleRubin}).
The statistical treatment of missing data is broad, and to a fairly large extent assumes parametric models both for the data
generating process as well as the process that creates the missing data.
One of the most popular models for the missing data process is {\em Missing Completely at Random (MCAR)} where the missing attributes
are selected independently from the values.

We outline a few of the main approaches handling missing data in the statistics literature.
The simplest method is simply to discard records with missing data, even this assumes independence between the examples with missing values
and their labels. In order to estimate simple statistics, such as the expected value of an attribute, one can use importance sampling methods,
where the probability of an attribute being missing can depend on it value (e.g., using the Horvitz-Thompson estimator \cite{Horvitz-Thompson}).
A large body of techniques is devoted to {\em imputation} procedures which complete the missing data. This can be done by replacing
a missing attribute by its mean (mean imputation), or using a regression based on the observed value (regression imputation), or sampling the
other examples to complete the missing value (hot deck). \footnote{We remark that our model implicitly includes mean-imputation or $0$-imputation method
and therefore will always outperform them.}
The imputation
 methodologies share a similar goal as matrix completion, namely reduce the problem to one with complete data, however their methodologies and
motivating scenarios are very different. Finally, one can build a complete Bayesian model for both the observed and unobserved data and use it to perform
inference. As with almost any Bayesian methodology, its success depends largely on selecting the right model and prior,
this is even ignoring the computational issues which make inference in many of those models computationally intractable.

%

In the machine learning community, missing data was considered in the framework of limited attribute observability
\cite{ben1998learning} and its many refinements  \cite{DekelSL2010,cesa2010efficient,cesa2011online,HazanK12a}.
However, to the best of our knowledge, the low-rank property is not captured by previous work, nor is the extreme  amount of missing data.
More importantly, much of the research is focused on selecting {\em which attributes to observe} or on missing attributes at test or train time (see also \cite{eban2014discrete,globerson2006nightmare}). In our case the learner has no control
which attributes are observable in an example and the domain is fixed. The latter case is captured in the work of \cite{Chechik:2008:MCD:1390681.1390682}, who rescale inner-products according to the amount of missing data. Their method, however, does not entail theoretical gaurantees on reconstruction in the worst case, and gives rise to non-convex programs.

A natural and intuitive methodology to follow is to treat the labels (both known and unknown)
as an additional column in the data matrix and complete the data using a matrix completion algorithm, thereby obtaining the classification.
Indeed, this  exactly was proposed by \cite{GoldbergZRXN10}.
Although this is a natural approach,
we show that completion is neither necessary nor sufficient for classification.
Furthermore, the techniques for provably completing a low rank matrix are only known under probabilistic models with restricted distributions
\cite{SrebroThesis,CandesR09,lee:practical,salakhutdinov:collaborative,ShamirS11}.
The only non-probabilistic matrix completion algorithm in the online learning setting we are aware of is \cite{HazanKS12},
which we were not able to use for our purposes.

\paragraph{Is matrix completion sufficient and/or necessary?} We demonstrate that classification with missing data is provably different from
that of matrix completion.
We start by considering a learner that tries to complete the missing entries in an unsupervised manner and then performs classification on the completed data,
this approach is close akin to imputation techniques, generative models and any other two step --
unsupervised/supervised algorithm. Our example shows that even under realizable assumptions, such an algorithm may fail.
We then proceed to analyze the approach previously mentioned -- to treat the labels as an additional column.

To see that unsupervised completion is insufficient for prediction, consider the  example in Figure \ref{comp1}:
the original data is represented by filled red and green dots and it is linearly separable.
Each data point will have one of its two coordinates missing (this can even be done at random.
In the figure the arrow from each instance points to the observed attribute.
However, the rank-one completion of projection onto the pink hyperplane is possible, and admits no separation.
The problem is clearly that the mapping to a low dimension is independent from the labels, and therefore we should not expect that
properties that depend on the labels, such as linear separability, will be maintained.

\begin{figure}[h!] \label{comp1}
	\begin{center}
		\includegraphics[width=1.5in]{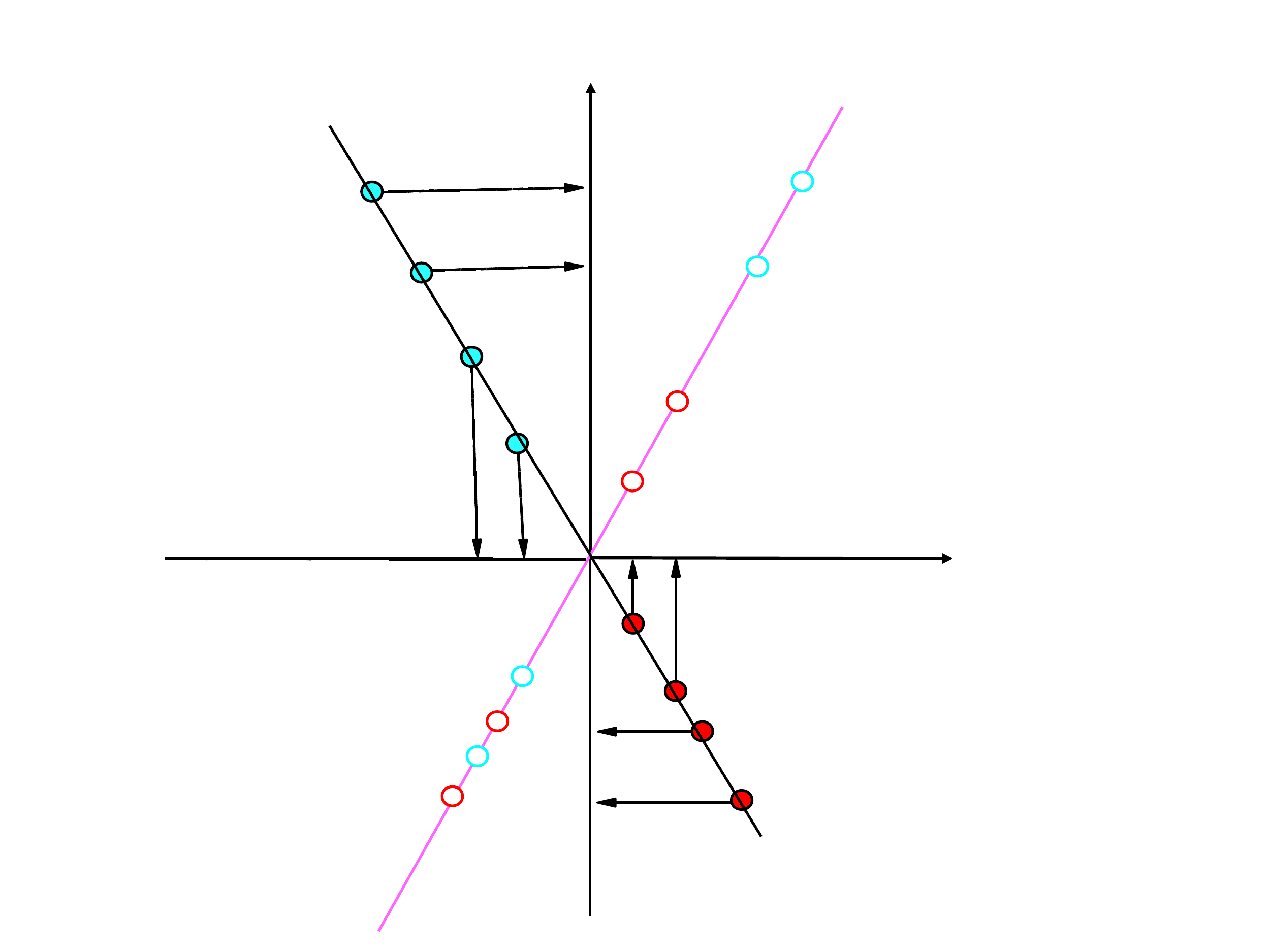}
	\end{center}
	\caption{Linearly separable data, for which certain completions make the data is non-separable.}
\end{figure}

Next, consider a learner that treats the labels as an additional column. \cite{GoldbergZRXN10} Considered the following problem:

\begin{program}{G}\label{prob:G}
\obj{Z}{\rank(Z)}
\st{ Z_{i,j}=\xx_{i,j}, & (i,j)\in \Omega}\;,
\end{program}
where $\Omega$ is the set of observed attributes (or observed labels for the corresponding columns).
Now assume that we always see one of the following examples: $[1, ~ *, ~ 1, ~* ]$, $[*, ~ -1, ~ *, ~ -1]$, or $[1, ~, -1, ~ 1, ~ -1]$.
The observed labels are respectively $1$,$-1$ and $1$. A typical data matrix with one test point might be of the form:

\begin{equation}\label{eq:M}M= \left[\begin{array}{rrrr||r}
1 & * & 1 & * & 1 \\
* & -1 &  * & -1 &-1 \\
1 & -1 & 1 & -1 & 1\\
1 & * & 1 & * & * \\
\end{array}\right]\end{equation}
First note that there is no $1$-rank completion of this matrix. On the other hand,
we will show that there is more than one $2$-rank completion each lead to a different classification of the test point.
The first possible completion is to complete odd columns to a constant one vector, and even column vectors to a constant $-1$ vector.
Then complete the labeling whichever way you choose.
Clearly there is no hope for this completion to lead to any meaningful result as the label vector is independent of the data columns.
On the other hand we may complete the first and last rows to a constant $1$ vector, and the second row to a constant $-1$ vector. All possible completions lead to an optimal solution w.r.t \probref{prob:G} but have different outcome w.r.t classification.
We stress that this is not a sample complexity issue. Even if we observe abundant amount of data, the completion task is still ill-posed.

Finally, matrix completion is also not necessary for prediction.
Consider movie recommendation dataset  with two separate populations, French and Chinese, where each population reviews a different set of movies.
Even if each population has a low rank, performing successful matrix completion, in this case, is impossible (and intuitively it does not make sense
in such a setting).
%
%
However, linear classification in this case is possible via a single linear classifier, for example
by setting all non-observed entries to zero. For a numerical example, return to the matrix $M$ in \eqref{eq:M}. Note that we observe only three instances hence the classification task is easy but doesn't lead to reconstruction of the missing entries.

\section{Problem Setup and Main Result}\label{sec:prob}

We begin by presenting the general setting: A vector with missing entries can be modeled as a tuple $\xx\times \oo$, where $\xx\in \mathbb{R}^d$ and $\oo\in 2^d$ is a subset of indices. The vector $\xx$ represents the \emph{full data} and the set $\oo$ represents the \emph{observed attributes}. Given such a tuple, let us denote by $\xx_{\oo}$ a vector in $(\mathbb{R}\cup\{*\})^d$ such that

\[(\xx_{\oo})_i = \begin{cases}
\xx_i & i \in \oo \\
* & \mathrm{else} \end{cases}\]

The task of learning a linear classifier with missing data is to return a target function over $\xx_\oo$ that competes with best linear classifier over $\xx$. Specifically, a sequence of triplets $\{(\sampx{i}.\sampo{i}.y_i)\}_{i=1}^m$ is drawn iid according to some distribution $D$. An algorithm is provided with the sample $S=\{(\sampox{i},y_i)\}_{i=1}^m$ and should return a target function $f_S$ over missing data such that w.h.p:
\begin{equation}\label{eq:main} \mathop\mathbb{E}\left[\ell (f_S(\xx_\oo),y))\right] \le \min_{ \ww \in B_d(1) } \mathop\mathbb{E}\left[ \ell ( \ww\cdot \xx,y))\right] +\epsilon,
 \end{equation}
where $\ell$ is the loss function and $B_d(r)$ denotes the Euclidean ball in dimension $d$ of radius $\sqrt{r}$.
For brevity, we will say that a target function $f_S$ is \epg if \eqref{eq:main} holds.

Without any assumptions on the distribution $D$, the task is ill-posed. One can construct examples where the learner over missing data doesn't have enough information to compete with the best linear classifier. Such is the case when, e.g., $y_i$ is  some attribute that is constantly concealed and independent of all other features.  Therefore, certain assumptions on the distribution must be made.

One reasonable assumption is to assume that the marginal distribution $D$ over $\xx$ is supported on a small dimensional linear subspace $E$ and that for every set of observations, we can linearly reconstruct the vector $\xx$ from the vector $P_{\oo}\xx$, where $P_\oo: \mathbb{R}^d \to \mathbb{R}^{|\oo|}$ is the projection on the observed attributes. In other words, we demand that the mapping ${P_{\oo}}_{|E}:E \to P_{\oo}E$, which is the restriction of $P_{\oo}$ to $E$, is full-rank. As the learner doesn't have access to the subspace $E$, the learning task is still far from trivial.

We give a precise definition of the last assumption in \asref{as:lowrank}. Though our results hold under the low rank assumption the convergence rates we give depend on a certain regularity parameter. Roughly,  we parametrize the "distance" of ${P_{\oo}}_{|E}$  from singularity, and our results will quantitively depend on this distance. Again, we defer all rigorous definitions to \secref{sec:assumptions}.

Our first result is a an upper bound on the sample complexity of the problem. We then proceed to a more general statement that entails an efficient kernel-based algorithm.

\subsection{Main Result}\label{sec:main}
\begin{theorem}[Main Result]\label{thm:main}
Assume that $\ell$ is a $L$-Lipschitz convex loss function  Let $D$ be a $\lambda$-regular distribution (see \defref{def:regularity})
Let $\gamma(\epsilon) \ge \frac{\log 2L /(\lambda\epsilon)}{\lambda}$ and \[\Gamma(\epsilon)= \frac{d^{\gamma(\epsilon)+1}-d}{d-1}.\]
There exists an algorithm (independent of $D$) that receives a sample $S= \{(\sampox{i},y_i)\}_{i=1}^m$ of size $m\in \Omega \left(\frac{ L^2 \Gamma(\epsilon)^2\log 1/\delta}{\epsilon^2}\right)$ and returns a target function $f_S$ that is \epg with probability at least $(1-\delta)$. The algorithm runs in time $\mathrm{poly}(|S|)$.
\end{theorem}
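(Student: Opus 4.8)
The plan is to exhibit an explicit, $D$-independent feature map $\psi$ into $\reals^{\Gamma(\epsilon)}$ under which a bounded-norm linear predictor can imitate any linear classifier $\ww$ acting on the full data $\xx$, and then to run empirical risk minimization over that feature map, which is a convex program. Throughout write $P_E$ for the orthogonal projection onto the subspace $E$ and $\bar\oo$ for the complement of $\oo$.

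First, I would reduce to $\ww\in E$ (projecting $\ww$ onto $E$ changes neither $\ww\cdot\xx$ for $\xx\in E$ nor increases $\|\ww\|$) and unroll a reconstruction recursion. For $\xx\in E$ and any $\vv$, $\vv\cdot\xx=\vv\cdot\xx_\oo+(P_{\bar\oo}\vv)\cdot\xx=\vv\cdot\xx_\oo+(P_E P_{\bar\oo}\vv)\cdot\xx$, using $\xx=P_E\xx$ and symmetry of $P_\oo,P_E$. Iterating with $\ww^{(0)}=\ww$ and $\ww^{(k+1)}=P_E P_{\bar\oo}\ww^{(k)}=(P_E P_{\bar\oo})^{k+1}\ww$ gives the telescoping identity $\ww\cdot\xx=\sum_{k=0}^{\gamma-1}\ww^{(k)}\cdot\xx_\oo+\ww^{(\gamma)}\cdot\xx$ for $\gamma=\gamma(\epsilon)$. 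The tail is killed by regularity: $(P_E P_{\bar\oo}P_E)|_E=I-(P_E P_\oo P_E)|_E$ is positive semidefinite with operator norm at most $1-\lambda$, since $\defref{def:regularity}$ supplies $(P_E P_\oo P_E)|_E\succeq\lambda I$; as each $\ww^{(k)}$ remains in $E$, $\|\ww^{(\gamma)}\|\le(1-\lambda)^\gamma\le e^{-\lambda\gamma}\le\lambda\epsilon/(2L)$ by the choice of $\gamma$, and with the boundedness of $\xx$ guaranteed by regularity this gives $|\ww^{(\gamma)}\cdot\xx|\le\epsilon/(2L)$.

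Second, I would turn $\sum_{k<\gamma}\ww^{(k)}\cdot\xx_\oo$ into a linear functional of a computable feature map by expanding in the standard basis. Since $P_E P_{\bar\oo}\vv=\sum_{i\notin\oo}v_i(P_E e_i)$, one gets $\ww^{(k)}\cdot\xx_\oo=\sum_{j\in\oo}\sum_{i_1,\dots,i_k\notin\oo}w_{i_1}(P_E)_{i_2 i_1}(P_E)_{i_3 i_2}\cdots(P_E)_{i_k i_{k-1}}(P_E)_{j i_k}\,x_j$. I therefore define $\psi(\xx_\oo)\in\reals^{\Gamma(\epsilon)}$ indexed by sequences $(i_1,\dots,i_k,j)$ of length $1\le k+1\le\gamma$ over $\{1,\dots,d\}$, with entry $\mathbf{1}[i_1\notin\oo]\cdots\mathbf{1}[i_k\notin\oo]\,\mathbf{1}[j\in\oo]\,x_j$ (which depends only on $\xx_\oo$), and the companion weight $\vv^\star$ with entry $w_{i_1}(P_E)_{i_2 i_1}\cdots(P_E)_{j i_k}$, so that $\vv^\star\cdot\psi(\xx_\oo)=\sum_{k<\gamma}\ww^{(k)}\cdot\xx_\oo$. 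Counting sequences gives dimension $\sum_{\ell=1}^{\gamma}d^\ell=\Gamma(\epsilon)$. Summing out the inner indices via the column identity $\sum_i(P_E)_{i\ell}^2=(P_E)_{\ell\ell}\le 1$ for a projection yields $\|\vv^\star\|\le\sqrt{\gamma}\le\sqrt{\Gamma(\epsilon)}$, while $\|\psi(\xx_\oo)\|^2\le\sum_{k<\gamma}d^k\|\xx\|^2\le\Gamma(\epsilon)$. Combined with the first step, $|\vv^\star\cdot\psi(\xx_\oo)-\ww\cdot\xx|\le\epsilon/(2L)$ pointwise on the support, hence $L$-Lipschitzness gives $\mathop\mathbb{E}[\ell(\vv^\star\cdot\psi(\xx_\oo),y)]\le\mathop\mathbb{E}[\ell(\ww\cdot\xx,y)]+\epsilon/2$.

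Third, the algorithm is empirical risk minimization of the convex $L$-Lipschitz loss over $\{\vv:\|\vv\|\le\sqrt{\Gamma(\epsilon)}\}$ with features $\psi$; this is a convex program in $\reals^{\Gamma(\epsilon)}$ with $m$ constraints, hence solvable in $\mathrm{poly}(|S|)$ time (for the stated $m$, $\Gamma(\epsilon)$ is polynomial in $|S|$). Standard Rademacher generalization for linear predictors of norm $\le\sqrt{\Gamma(\epsilon)}$ with features of norm $\le\sqrt{\Gamma(\epsilon)}$ and $L$-Lipschitz loss gives a uniform deviation of order $L\,\Gamma(\epsilon)\sqrt{\log(1/\delta)/m}$, which is at most $\epsilon/4$ for $m\in\Omega(L^2\Gamma(\epsilon)^2\log(1/\delta)/\epsilon^2)$; chaining ERM optimality against the feasible competitor $\vv^\star$ for the optimal full-data $\ww$ with the approximation bound of the second step shows $f_S(\xx_\oo)=\vv_S\cdot\psi(\xx_\oo)$ is \epg with probability at least $1-\delta$. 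I expect the conceptual crux to be the second step: recognizing that the (unknown, $\oo$-dependent) reconstruction operator, after the telescoping expansion and a change to the standard basis, splits into a data part computable from $\xx_\oo$ alone and a coefficient part absorbing all the unknown geometry of $E$, and then checking that this coefficient vector has norm bounded independently of $d$ — the column-norm identity for projections is exactly what makes that work. The remaining care is in the regularity bookkeeping for the tail and in verifying that the iterates $\ww^{(k)}$ stay in $E$ so the contraction applies.
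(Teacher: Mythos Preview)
Your argument is correct and follows the same high-level template as the paper --- expand the (unknown, $\oo$-dependent) reconstruction as a degree-$\gamma$ polynomial, split it into a computable feature map times an unknown but bounded coefficient vector, and learn linearly over the features --- but the concrete decomposition is genuinely different. The paper first rewrites $\ww\cdot\xx=(P_\oo\ww)^\top Q_{\oo,\oo}^\dagger(P_\oo\xx)$ with $Q=P_E$ (\lemref{lem:lintof0}), truncates the Neumann series $Q_{\oo,\oo}^\dagger\approx\sum_{j<\gamma}(I_{\oo,\oo}-Q_{\oo,\oo})^j$ on the range of $Q_{\oo,\oo}$ (\lemref{lem:approximation}), and unrolls to the map $\phi_\gamma$ indexed by sequences lying entirely in $\oo$, with coefficients built from entries of $I-P_E$. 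Your telescoping instead iterates $P_E P_{\bar\oo}$ on the $\ww$-side, producing a feature map indexed by sequences whose intermediate indices lie in $\bar\oo$ and whose coefficient vector is built from entries of $P_E$. Both are valid; yours is arguably more elementary since it avoids the pseudoinverse altogether. Incidentally, your $\psi$ also admits a closed-form kernel, $\frac{|\bar\oo_1\cap\bar\oo_2|^\gamma-1}{|\bar\oo_1\cap\bar\oo_2|-1}\sum_{j\in\oo_1\cap\oo_2}\xx^{(1)}_j\xx^{(2)}_j$, dual to the paper's; you do not need it because you work explicitly in $\reals^\Gamma$, which is legitimate since for the stated $m$ one has $\Gamma=\mathrm{poly}(m)$. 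Your projection-column identity actually yields $\|\vv^\star\|^2\le\gamma$, strictly tighter than the paper's crude $\|\vv\|^2\le\Gamma$ in \coref{cor:improper}; exploiting it (ERM over the $\sqrt\gamma$-ball) would drop the sample complexity to $\tilde O(L^2\gamma\,\Gamma/\epsilon^2)$, though you deliberately give this away to match the stated bound. The one place to tighten is the tail estimate: regularity only supplies $\|P_\oo\xx\|\le 1$, not a bound on $\|\xx\|$, so ``boundedness of $\xx$ guaranteed by regularity'' needs the extra line $\|\xx\|\le\|P_\oo\xx\|/\lambda\le 1/\lambda$ via the singular-value lower bound on $P_\oo|_E$; the paper sidesteps this by keeping the whole estimate in terms of $P_\oo\xx$.
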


\thmref{thm:main} gives an upper bound on the computational and sample complexity of learning a linear classifier with missing data under the low rank assumption. As the sample complexity is quasipolynomial, this has limited practical value in many situations. However, as the next theorem states, $f_S$ can actually be computed by applying a kernel trick. Thus, under further \emph{large margin} assumptions we can significantly improve performance.

\begin{theorem}\label{thm:main2} 
For every $\gamma \ge 0$, there exists an embedding over missing data \[\phi_\gamma: \xx_\oo \to \mathbb{R}^\Gamma\]
such that $\Gamma= \sum_{k=1}^\gamma d^{k}=\frac{d^{\gamma+1}-d}{d-1}$, and the scalar product between two samples $\phi_\gamma(\sampox{1})$ and $\phi_\gamma(\sampox{2})$ can be efficiently computed, specifically it is given by the formula:

\[
k_\gamma(\sampox{1},\sampox{2}):=
 \frac{|\sampo{1}\cap\sampo{2}|^{\gamma}-1}{|\sampo{1}\cap\sampo{2}|-1}  \times \sum_{i\in \sampo{1}\cap\sampo{2}} \sampx{1}_i\cdot \sampx{2}_i.
\]
In addition, let $\ell$ be an $L$-Lipschitz loss function and $S=\{\sampox{i}\}_{i=1}^m$ a sample drawn iid according to a distribution $D$. We make the assumption that $\|P_{\oo}\xx\|\le 1$ a.s. The followings hold:
\begin{enumerate}
\item\label{thm2:efficient} At each iteration of \algref{alg:GDMD} we can efficiently compute $\vv_t^\top \phi_\gamma(\sampox{t})$ for any new example $\sampox{t}$. Specifically it is given by the formula \[\vv_t^\top\phi_{\gamma}(\sampox{t}):=\sum_{i=1}^t {\alpha^{(t)}_i}k(\sampox{i},\sampox{t}).\] Hence \algref{alg:GDMD} runs in $\mathrm{poly}(T)$ time and sequentially produces target functions $f_t(\xx_\oo) = \vv_t^\top \phi_\gamma(\xx_\oo)$ that can be computed at test time in $\mathrm{poly}(T)$ time.
\item\label{thm2:optimal} Run \algref{alg:GDMD} with $\eta_t = \frac{C}{t}$, $\rho=1$ and $T$. Let $\bar{\vv}=\frac{1}{T} \sum_{t=1}^m \vv_t$, then with probability $(1-\delta)$:
\begin{equation}\label{eq:optimization} \frac{1}{2} \|\bar{\vv}\|^2 +  \frac{C}{m} \sum_{i=1}^m   \ell ( \bar{\vv}^\top\phi_\gamma(\sampox{i}) ,y_i) \le   \min \frac{1}{2}\|\vv\|^2 + \frac{C}{m}\sum_{i=1}^m \left[\ell( \vv^\top \phi_\gamma(\sampox{i}),y_i)\right] + \tilde{O}\left(\frac{ (C L)^2\Gamma\ln 1/\delta}{T} \right).\end{equation}
\item\label{thm2:competetive} For any $\epsilon>0$, if $D$ is a $\lambda$-regular distribution and  $\gamma \ge \frac{\log 2{L}/(\lambda\epsilon)}{\lambda}$ then for some $\vv^* \in B_{\Gamma}(\Gamma)$
\[ \mathbb{E}\left[ \ell ( \vv^*\cdot \phi_\gamma(\xx_\oo ,y )\right] \le \min_{\ww \in B_d(1) }\mathbb{E}\left[\ell( \ww\cdot \phi_\gamma(\xx_\oo),y)\right] + \epsilon.\]
\end{enumerate}
\end{theorem}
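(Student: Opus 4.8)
Four things must be shown: the existence of the embedding together with the kernel identity, and the three numbered items. I would treat them in the order (0) embedding/kernel, (1) claim~\ref{thm2:competetive} (the heart), then (2) claims~\ref{thm2:efficient}--\ref{thm2:optimal}. For (0), let $\phi_\gamma(\xx_\oo)$ have one coordinate per index tuple $(j_1,\dots,j_k)\in\{1,\dots,d\}^k$ with $1\le k\le\gamma$ (dimension $\sum_{k=1}^{\gamma}d^k=\Gamma$) and value $(\phi_\gamma(\xx_\oo))_{(j_1,\dots,j_k)}=\xx_{j_k}\prod_{l=1}^{k}\mathbf{1}[j_l\in\oo]$. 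In $\langle\phi_\gamma(\sampox1),\phi_\gamma(\sampox2)\rangle$ the coordinate $(j_1,\dots,j_k)$ contributes $\sampx1_{j_k}\sampx2_{j_k}$ exactly when every $j_l\in\sampo1\cap\sampo2$, so the inner product is $\sum_{k=1}^{\gamma}|\sampo1\cap\sampo2|^{k-1}\sum_{i\in\sampo1\cap\sampo2}\sampx1_i\sampx2_i$; summing the geometric series in $|\sampo1\cap\sampo2|$ gives exactly $k_\gamma$ (the fraction read as $\gamma$, resp.\ $0$, when $|\sampo1\cap\sampo2|$ is $1$, resp.\ $0$).

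\emph{Claim~\ref{thm2:competetive} (the crux).} Fix $\ww\in B_d(1)$ and $(\xx,\oo)$ in the support. Let $P_E$ be the orthogonal projection onto the subspace $E$ of \asref{as:lowrank}, and $\Pi_\oo=P_\oo^{\top}P_\oo$ the diagonal $0/1$ projector onto the observed coordinates (so $\Pi_\oo\xx$ is zero-imputation). Since $\xx\in E$ we have $\Pi_\oo\xx=\Pi_\oo P_E\xx$, hence the self-adjoint $M_\oo:=(P_E\Pi_\oo P_E)|_{E}$ satisfies $M_\oo\xx=P_E\Pi_\oo\xx$ and, by $\lambda$-regularity (\defref{def:regularity}), $\lambda\,\Id_E\preceq M_\oo\preceq\Id_E$. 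Then $\|\Id_E-M_\oo\|\le1-\lambda$ and $\sum_{k\ge0}(\Id_E-M_\oo)^kP_E\Pi_\oo\xx=M_\oo^{-1}M_\oo\xx=\xx$ converges geometrically, so the truncation $\hat{\xx}:=\sum_{k=0}^{\gamma-1}(\Id_E-M_\oo)^{k}P_E\Pi_\oo\xx$ obeys $\|\xx-\hat{\xx}\|=\|(\Id_E-M_\oo)^{\gamma}\xx\|\le(1-\lambda)^{\gamma}\|\xx\|\le e^{-\lambda\gamma}/\sqrt{\lambda}$ (using $\|P_\oo\xx\|^2=\xx^{\top}M_\oo\xx\ge\lambda\|\xx\|^2$ and the a.s.\ bound $\|P_\oo\xx\|\le1$); for $\gamma\ge\frac1\lambda\log\frac{2L}{\lambda\epsilon}$ this is $\le\epsilon/(2L)$, so $L$-Lipschitzness of $\ell$ gives $\mathbb{E}[\ell(\ww\cdot\hat{\xx},y)]\le\mathbb{E}[\ell(\ww\cdot\xx,y)]+\epsilon/2$. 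It remains to write $\xx_\oo\mapsto\ww\cdot\hat{\xx}$ as a bounded linear functional of $\phi_\gamma(\xx_\oo)$ whose coefficient vector is independent of $\xx$ (and of $E$). Expanding $(\Id_E-M_\oo)^k$ binomially and collecting powers gives $\hat{\xx}=\sum_{j=1}^{\gamma}(-1)^{j-1}\binom{\gamma}{j}M_\oo^{j}\xx=\sum_{j=1}^{\gamma}(-1)^{j-1}\binom{\gamma}{j}P_E(\Pi_\oo P_E)^{j-1}\Pi_\oo\xx$, and in coordinates (using $P_E=P_E^{\top}$)
\[
\ww\cdot\bigl(P_E(\Pi_\oo P_E)^{j-1}\Pi_\oo\xx\bigr)=\sum_{a_1,\dots,a_j\in\oo}(P_E\ww)_{a_1}(P_E)_{a_1a_2}\cdots(P_E)_{a_{j-1}a_j}\,\xx_{a_j},
\]
which is $\langle\vv^{(j)},\phi_\gamma(\xx_\oo)\rangle$ restricted to the length-$j$ block of $\phi_\gamma$, where $\vv^{(j)}_{(a_1,\dots,a_j)}:=(P_E\ww)_{a_1}\prod_{l=1}^{j-1}(P_E)_{a_la_{l+1}}$. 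Hence $\ww\cdot\hat{\xx}=\langle\vv^*,\phi_\gamma(\xx_\oo)\rangle$ identically, with $\vv^*:=\sum_{j=1}^{\gamma}(-1)^{j-1}\binom{\gamma}{j}\vv^{(j)}$. Finally, idempotence of $P_E$ ($\sum_a(P_E)^2_{ba}=(P_E)_{bb}\in[0,1]$) makes the sums in $\|\vv^{(j)}\|^2$ telescope down to $\le\|P_E\ww\|^2\le1$, and since the length-$j$ blocks are orthogonal, $\|\vv^*\|^2=\sum_j\binom{\gamma}{j}^2\|\vv^{(j)}\|^2\le\binom{2\gamma}{\gamma}\le\Gamma$, i.e.\ $\vv^*\in B_\Gamma(\Gamma)$; applying this to the minimizing $\ww$ proves~\ref{thm2:competetive}.

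\emph{Claims~\ref{thm2:efficient} and~\ref{thm2:optimal}.} \algref{alg:GDMD} is stochastic (sub)gradient descent on the $\rho$-strongly-convex regularized risk $\widehat{F}(\vv)=\tfrac{\rho}{2}\|\vv\|^2+\tfrac{C}{m}\sum_i\ell(\vv^{\top}\phi_\gamma(\sampox i),y_i)$ in the RKHS of $k_\gamma$. For item~\ref{thm2:efficient}: each stochastic subgradient has the form $\rho\vv_t+C\,\ell'(\cdot)\,\phi_\gamma(\sampox t)$, so by induction $\vv_t\in\span\{\phi_\gamma(\sampox i):i\le t\}$; writing $\vv_t=\sum_i\alpha^{(t)}_i\phi_\gamma(\sampox i)$ yields $\vv_t^{\top}\phi_\gamma(\sampox t)=\sum_i\alpha^{(t)}_ik_\gamma(\sampox i,\sampox t)$, and $k_\gamma$ is computable in $O(d+\log\gamma)$ time (intersect the two observed sets, take an inner product over the intersection, one exponentiation), so the whole run and any test evaluation cost $\mathrm{poly}(T)$. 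For item~\ref{thm2:optimal}: with $\eta_t=C/t$ and $\rho=1$ this is SGD on a $1$-strongly-convex objective, for which the standard high-probability rate for the averaged iterate (a Freedman-type martingale bound for the probabilistic part) gives $\widehat{F}(\bar{\vv})-\min_{\vv}\widehat{F}\le\tilde{O}(B^2\ln(1/\delta)/T)$ with $B$ bounding the stochastic-subgradient norm; here $\|\phi_\gamma(\sampox t)\|^2=k_\gamma(\sampox t,\sampox t)=\tfrac{|\oo^t|^{\gamma}-1}{|\oo^t|-1}\|P_{\oo^t}\xx^t\|^2\le\Gamma$ (from $\|P_\oo\xx\|\le1$ a.s.\ and $|\oo|\le d$) and strong convexity keeps $\|\vv_t\|=O(CL\sqrt{\Gamma})$, so $B=O(CL\sqrt{\Gamma})$ and the bound is $\tilde{O}((CL)^2\Gamma\ln(1/\delta)/T)$, which is \eqref{eq:optimization}. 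Plugging $\vv=\vv^*$ into the right-hand side of \eqref{eq:optimization}, taking $C\gtrsim\Gamma/\epsilon$ so that $\tfrac12\|\vv^*\|^2\le\tfrac{\Gamma}{2}$ is absorbed, and adding a routine Rademacher bound for the radius-$\sqrt{\Gamma}$ RKHS ball, one gets an \epg target function as soon as $m,T=\mathrm{poly}(\Gamma/\epsilon,\ln(1/\delta))$ — this is also how \thmref{thm:main} follows.

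\emph{Main obstacle.} Everything hard is in claim~\ref{thm2:competetive}: extracting from \defref{def:regularity} precisely the spectral estimate $M_\oo\succeq\lambda\Id_E$, and — the genuinely fiddly step — organizing the truncated Neumann series so that $\ww\cdot\hat{\xx}$ collapses onto the combinatorial embedding $\phi_\gamma$, whose kernel sees only $|\sampo1\cap\sampo2|$. This is what dictates expanding in powers of $M_\oo=P_E\Pi_\oo P_E$ — so that every surviving index constraint is ``$\in\oo$'' — rather than the more natural series in $\Pi_{\oo^c}$; and the binomial-coefficient bookkeeping that results is the one place where the precise relation between $\gamma$ and $\Gamma$, hence the norm bound $\vv^*\in B_\Gamma(\Gamma)$, actually gets used.
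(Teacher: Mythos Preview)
Your treatment of the embedding, the kernel identity, and items~\ref{thm2:efficient}--\ref{thm2:optimal} is essentially the paper's: the same explicit $\phi_\gamma$, the same geometric-series computation of $k_\gamma$, and the same appeal to the Pegasos-type analysis of \cite{shalev2011pegasos} for the SGD bound. Nothing to add there.

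For item~\ref{thm2:competetive} you take a genuinely different route, and it has a gap. You work on $E$ with $M_\oo=P_E\Pi_\oo P_E|_E$, write $\hat\xx=(I_E-(I_E-M_\oo)^\gamma)\xx=\sum_{j=1}^{\gamma}(-1)^{j-1}\binom{\gamma}{j}M_\oo^{j}\xx$, and then read off $\vv^{(j)}$ from $M_\oo^{j}$. The telescoping bound $\|\vv^{(j)}\|^2\le\|P_E\ww\|^2\le1$ is nice, but it forces $\|\vv^*\|^2=\sum_{j}\binom{\gamma}{j}^{2}\|\vv^{(j)}\|^2\le\binom{2\gamma}{\gamma}$, and your assertion $\binom{2\gamma}{\gamma}\le\Gamma$ is \emph{false} when $d\le3$: for $d=2$, $\Gamma=2^{\gamma+1}-2$ while $\binom{2\gamma}{\gamma}\sim 4^{\gamma}/\sqrt{\pi\gamma}$; already $\gamma=3$ gives $20>14$. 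Since the theorem asserts $\vv^*\in B_\Gamma(\Gamma)$ for all $d$, this is a genuine hole, not a cosmetic one. (For $d\ge4$ your bound is fine, since $\binom{2\gamma}{\gamma}<4^{\gamma}\le\Gamma$.)

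The paper avoids the binomial coefficients entirely by a different choice of ``where to live''. Instead of working on $E$ with $M_\oo$, it works in $\reals^{|\oo|}$ with the Neumann series $\sum_{j=0}^{\gamma-1}\bigl(I_{\oo,\oo}-(P_E)_{\oo,\oo}\bigr)^{j}$ approximating $\bigl((P_E)_{\oo,\oo}\bigr)^{\dagger}$ (this is \lemref{lem:approximation}). The point you almost saw in your ``main obstacle'' paragraph is that $I_{\oo,\oo}-(P_E)_{\oo,\oo}=(I-P_E)_{\oo,\oo}$, i.e.\ the \emph{entries} come from the projection $I-P_E$ onto $E^{\perp}$, while the \emph{indices} are already restricted to $\oo$ by virtue of sitting inside $\reals^{|\oo|}$. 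Hence the expansion of $(P_\oo\ww)^{\top}\bigl((I-P_E)_{\oo,\oo}\bigr)^{j}(P_\oo\xx)$ over sequences $\ss\subseteq\oo$ of length $j+1$ (\lemref{lem:ftunfold}) gives $\vv_\ss=\ww_{\ss_1}\prod_{l}(I-P_E)_{\ss_l\ss_{l+1}}$ with unit coefficients, and since every entry of the projection $I-P_E$ lies in $[-1,1]$ one gets $|\vv_\ss|\le|\ww_{\ss_1}|$ and $\|\vv\|^2\le\Gamma$ directly (\coref{cor:improper}). In your language: rather than expanding in powers of $M_\oo$ or of $\Pi_{\oo^c}$, the paper expands in powers of $(I-P_E)_{\oo,\oo}$, which simultaneously keeps all index constraints of the form ``$\in\oo$'' \emph{and} keeps all scalar coefficients equal to~$1$.
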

To summarize, \thmref{thm:main2} states that we can embed the sample points with missing attributes in a high dimensional, finite, Hilbert space of dimension $\Gamma$, such that:
\begin{itemize}
\item The scalar product between embeded points can be computed efficiently. Hence, due to the conventional representer argument, the task of empirical risk minimization is tractable.
\item Following the conventional analysis of kernel methods: Under large margin assumptions in the ambient space, we can compute a predictor with scalable sample complexity and computational efficiency.

\item Finally, the best linear predictor over embedded sample points in a $\sqrt{\Gamma}$--ball is comparable to the best linear predictor over fully observed data.
\end{itemize}
 Taken together, we can learn a predictor with sample complexity $\Omega(\Gamma^2(\epsilon)/\epsilon^2\log\frac{1}{\delta})$ and \thmref{thm:main} holds.
 
   For completeness we present the method together with an efficient algorithm that optimizes the RHS of \eqref{eq:optimization} via an SGD method.  The optimization analysis is derived in a straightforward manner from the work of \cite{shalev2011pegasos}. Other optimization algorithms exist in the literature, and we chose this optimization method as it allows us to also derive regret bounds which are formally stronger (see \secref{sec:regret}). We stress that the main novelty of this paper is not in any specific optimization algorithm, but the introduction of a new kernel and our guarantees rely solely on it.

Finally, note that $\phi_1$ induces the same scalar product as a $0$-imputation. In that respect, by considering different $\gamma=1,2, \ldots$ and using a holdout set we can guarantee that our method will outperform the $0$-imputation method. By normalizing or adding a bias term we can in fact compete with mean-imputation or any other first order imputation.

\subsection{Regret minimization for joint subspace learning and classification}\label{sec:regret}

A significant technical contribution of this manuscript is the agnostic learning of a subspace coupled with a linear classifier.  A subspace is represented by a projection matrix $Q \in \reals^{d \times d}$, which satisfies $Q^2 = Q$. Denote the following class of target functions \[\mathcal{F}_0= \{f_{\ww,Q}: \ww \in B_d,  ~ Q\in M_{d\times d}, ~ Q^2=Q\}\] where
$ f_{\ww,Q} (\xx_\oo) $ is the linear predictor defined by $\ww$ over subspace defined by the matrix $Q$, as formally defined in definition \ref{def:F0}.

Given the aforementioned efficient kernel mapping $\phi_\gamma$, we consider the following kernel-gradient-based online algorithm for classification called KARMA (Kernelized Algorithm for Risk-minimization with Missing Attributes).

 \begin{algorithm}[h!]
 	\caption{KARMA: Kernelized Algorithm for Risk-minimization with Missing Attributes \label{alg:GDMD} }
 	\begin{algorithmic}[1]
 		\STATE Input: parameters $\gamma > 1, ~ \{\eta_t > 0 \} , 0 < \rho  < 1, B > 0 $
 		\FOR {$t=1$ to $T$}
		\STATE Observe example $(\xx^t_{\oo^t},y_t) $, suffer loss $\ell( \vv_t^\top \phi_\gamma(\xx^t_{\oo^t}) , y_t)$
 		\STATE Update 
 		$$\alpha_i^{(t)} = \begin{cases} (1 - \eta_t \rho ) \cdot \alpha_i^{(t-1)} & i < t \\ -\eta_t\ell'(\vv_t^\top \phi_\gamma(\xx^t_{\oo^t})) & i=t \\ 0 & \mathrm{else} \end{cases} $$
		$$ \vv_{t+1} = \sum_{i=1}^t \alpha_i^{(t)}\phi_\gamma(\xx^i_{\oo^i}) $$
		\ENDFOR
 	\end{algorithmic}
 \end{algorithm}
 
Our main result for the fully adversarial online setting is given next, and proved in the Appendix. Notice that the subspace $E^*$ and associated projection matrix $Q^*$ are chosen by an adversary and unknown to the algorithm. 

 \begin{theorem}\label{thm:regret}
 For any $\gamma > 1, \lambda>0, X > 0, \rho > 0, B > 0$,  $L$-Lipschitz  convex loss function $\ell$,  and $\lambda$-regular sequence  $\{ (\xx^t,\oo^t,y_t) \}$ w.r.t subspace $E^*$ and associated projection matrix $Q^*$ such that $\|\xx^t\|_\infty<X$, Run Algorithm \ref{alg:GDMD}  with $\{ \eta_t = \frac{ 1}{ \rho t} \}$, sequentially outputs $\{ \vv_t \in R^t \}$ such that
$$ \sum_t \ell( \vv_t ^\top \phi_\gamma(\xx^t_{\oo_t}) , y_t ) -   \min_{ \|\ww\|\leq 1 } \sum_t \ell(f_{\ww,Q^*} (\xx^t_{\oo_t}) ,y_t) \leq \frac{2 L^2 X^2 \Gamma} {\rho}(1 + \log T) + \frac{\rho}{2} T \cdot B   + \frac{e^{-\lambda \gamma} } {\lambda} L T$$ 

In particular, taking $\rho = \frac{LX \sqrt{\Gamma}}{\sqrt{BT}}$,  $\gamma  = \frac{1}{\lambda} \log {T} $ we obtain 

$$ \sum_t \ell( \vv_t ^\top \phi_\gamma(\xx^t_{\oo_t}) , y_t ) -   \min_{ \|\ww\|\leq 1} \sum_t \ell(f_{\ww,Q^*} (\xx^t_{\oo_t}) ,y_t) = O(    X L \sqrt{\Gamma B T}  ) $$ 

 \end{theorem}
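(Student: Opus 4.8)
The plan is to observe that \algref{alg:GDMD} is simply online (sub)gradient descent, run in the finite-dimensional RKHS $\mathbb{R}^\Gamma$ induced by the feature map $\phi_\gamma$, on the sequence of $\rho$-strongly-convex surrogate losses $f_t(\vv) := \tfrac{\rho}{2}\|\vv\|^2 + \ell(\vv^\top\phi_\gamma(\xx^t_{\oo^t}),y_t)$; to invoke the textbook logarithmic-regret guarantee for that setting; and then to charge the gap to $f_{\ww,Q^*}$ against a single well-chosen RKHS comparator whose predictions track those of $f_{\ww,Q^*}$ up to an error that decays in $\gamma$ under $\lambda$-regularity.

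First I would unroll the $\alpha$-update: since $\vv_t=\sum_{i<t}\alpha_i^{(t-1)}\phi_\gamma(\xx^i_{\oo^i})$, the update in \algref{alg:GDMD} reads $\vv_{t+1}=(1-\eta_t\rho)\vv_t-\eta_t\,\ell'(\vv_t^\top\phi_\gamma(\xx^t_{\oo^t}))\,\phi_\gamma(\xx^t_{\oo^t})$, which is exactly $\vv_{t+1}=\vv_t-\eta_t\nabla f_t(\vv_t)$. Then I would record the two a priori bounds the analysis needs. From the kernel formula, $\|\phi_\gamma(\xx^t_{\oo^t})\|^2=k_\gamma(\xx^t_{\oo^t},\xx^t_{\oo^t})=\tfrac{|\oo^t|^\gamma-1}{|\oo^t|-1}\sum_{i\in\oo^t}(\xx^t_i)^2\le \tfrac{d^\gamma-1}{d-1}\cdot dX^2=\Gamma X^2$, using $\|\xx^t\|_\infty<X$ and $|\oo^t|\le d$; and, since $\ell(\cdot,y)$ is $L$-Lipschitz so $|\ell'|\le L$, a one-line induction with $\eta_t=1/(\rho t)$ gives $\|\vv_t\|\le LX\sqrt{\Gamma}/\rho$ for all $t$. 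Consequently the subgradient norm along the trajectory satisfies $\|\nabla f_t(\vv_t)\|\le\rho\|\vv_t\|+L\|\phi_\gamma(\xx^t_{\oo^t})\|\le 2LX\sqrt{\Gamma}=:G$. Feeding this into the standard online-gradient-descent bound for strongly convex losses with step sizes $\eta_t=1/(\rho t)$ (exactly as in the analysis of \cite{shalev2011pegasos}) yields $\sum_t f_t(\vv_t)-\min_{\vv}\sum_t f_t(\vv)\le \frac{G^2}{2\rho}(1+\log T)=\frac{2L^2X^2\Gamma}{\rho}(1+\log T)$, and dropping the nonnegative regularizers $\tfrac\rho2\|\vv_t\|^2$ on the left bounds $\sum_t\ell(\vv_t^\top\phi_\gamma(\xx^t_{\oo^t}),y_t)$ by $\min_\vv\sum_t f_t(\vv)$ plus that regret term.

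It then remains to upper bound $\min_\vv\sum_t f_t(\vv)$ by plugging in one cleverly chosen vector. Let $\ww^\star\in B_d(1)$ minimize $\sum_t\ell(f_{\ww,Q^*}(\xx^t_{\oo^t}),y_t)$. Here I would invoke the worst-case, per-example analogue of \thmref{thm:main2}(\ref{thm2:competetive}): since the sequence is $\lambda$-regular with respect to $(E^*,Q^*)$, the reconstruction operator implicit in $f_{\ww^\star,Q^*}$ admits a convergent Neumann-type power-series expansion whose degree-$\gamma$ truncation is precisely what $\phi_\gamma$ realizes, so there is $\vv^*\in\mathbb{R}^\Gamma$ with $\|\vv^*\|^2\le\Gamma$ (hence $\|\vv^*\|^2\le B$ once the algorithm is run with $B\ge\Gamma$) and $|\vv^{*\top}\phi_\gamma(\xx^t_{\oo^t})-f_{\ww^\star,Q^*}(\xx^t_{\oo^t})|=O(e^{-\lambda\gamma}/\lambda)$ for every $t$. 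Lipschitzness of $\ell$ then gives $\sum_t f_t(\vv^*)\le\tfrac\rho2 TB+\min_{\|\ww\|\le1}\sum_t\ell(f_{\ww,Q^*}(\xx^t_{\oo^t}),y_t)+\tfrac{e^{-\lambda\gamma}}{\lambda}LT$, and combining with the previous paragraph produces the first displayed inequality of the theorem. The ``in particular'' statement follows by taking $\rho=LX\sqrt{\Gamma}/\sqrt{BT}$ to balance the first two terms at $O(XL\sqrt{\Gamma BT})$ and $\gamma=\tfrac1\lambda\log T$ to push the truncation term down to $O(L/\lambda)$.

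The first two paragraphs are essentially routine: the reduction to online gradient descent and the logarithmic-regret bound for strongly convex online optimization are standard, and the a priori norm bounds are short computations from the explicit kernel. The real obstacle is the construction of $\vv^*$: one must show both that truncating the reconstruction series at degree $\gamma$ costs only $O(e^{-\lambda\gamma}/\lambda)$ in prediction error per example --- this is the only place where the ``distance from singularity'' of $P_{\oo|E^*}$, i.e.\ $\lambda$-regularity, is used, and it must hold uniformly over the adversarially chosen sequence --- and that the resulting coefficient vector has squared norm at most $\Gamma$. This is the content of \thmref{thm:main2} re-read in sequence-wise rather than distributional form, and the main care needed is to verify that moving from the i.i.d.\ to the adversarial setting does not degrade any of those estimates.
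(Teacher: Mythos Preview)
Your proposal is correct and follows essentially the same route as the paper: both recognize that \algref{alg:GDMD} is online gradient descent on the $\rho$-strongly-convex surrogate losses $\tilde{\ell}_t(\vv)=\tfrac{\rho}{2}\|\vv\|^2+\ell(\vv^\top\phi_\gamma(\xx^t_{\oo^t}),y_t)$, bound the gradient norm along the trajectory by $2LX\sqrt{\Gamma}$, invoke the standard logarithmic regret bound for strongly convex online optimization, and then absorb the gap to $f_{\ww,Q^*}$ via the approximation \lemref{lem:approximation} together with the embedding \coref{cor:improper}. The only cosmetic difference is ordering: the paper first passes from $f_{\ww,Q^*}$ to $f^\gamma_{\ww,I-Q^*}$ to $\vv^*\cdot\phi_\gamma$ and then proves the kernel-space regret, whereas you prove the kernel regret first and then plug in $\vv^*$; the content is identical. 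Two small remarks: your inductive bound $\|\vv_t\|\le LX\sqrt{\Gamma}/\rho$ is in fact what the convex-combination argument yields (and is exactly what is needed so that $\rho\|\vv_t\|\le LX\sqrt{\Gamma}$), and your parenthetical that one needs $B\ge\|\vv^*\|^2$ (hence $B\ge\Gamma$ suffices) makes explicit a constraint the paper leaves implicit.
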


\section{Preliminaries and Notations}\label{sec:prem}
\subsection{Notations}\label{sec:notations}
As discussed, we consider a model where a distribution $D$ is fixed over $\mathbb{R}^d \times \mathcal{O}\times \mathcal{Y}$, where $\mathcal{O}=2^d$ consists of all subsets of $\{1,\ldots,d\}$.
We will generally denote elements of $\mathbb{R}^d$ by $\xx,\ww,\vv,\uu$ and elements of $\mathcal{O}$ by $\oo$. We denote by $B_d$ the unit ball of $\mathbb{R}^d$, and by $B_d(r)$ the ball of radius $\sqrt{r}$.

Given a subset $\oo$ we denote by $P_{\oo}:\mathbb{R}^d \to \mathbb{R}^{|\oo|}$ the projection onto the indices in $\oo$, i.e., if $i_1\le i_2\le\cdots \le i_k$ are the elements of $\oo$ in increasing order then $(P_{\oo} \xx)_j = \xx_{i_j}$.
 Given a matrix $A$ and a set of indices $\oo$, we let \[A_{\oo,\oo}= P_{\oo} A P_{\oo}^\top.\]

\subsection{Model Assumptions}\label{sec:assumptions}

\begin{definition}[$\lambda$-regularity]\label{def:regularity}
 We say that $D$ is $\lambda$-regular with associated subpsace $E$  if the following happens with probability $1$ (w.r.t the joint random variables $(\xx,\oo)$):
\begin{enumerate}

\item\label{as:compactness} $\|P_{\oo} \xx\|\le 1$.
\item\label{as:support} $\xx \in E$.
\item\label{as:reconstructible} $\ker(P_\oo P_E) = \ker(P_E)$ 
\item\label{as:nonsingular} If $\lambda_{\oo}>0$ is a strictly positive singular value of the matrix $P_{\oo}P_E$ then $\lambda_\oo \ge \lambda$.
\end{enumerate}
\end{definition}
\begin{assumption}[Low Rank Assumption]\label{as:lowrank}
We say that $D$ satisfies the low rank assumption with asscoicated subspace $E$ if it is $\lambda$-regular with associated subspace $E$ for some $\lambda>0$.
\end{assumption}

 Note that in our setting we assume that $\|P_{\oo}\xx\|\le 1$ a.s. If $\|\xx\|\le 1$ then $\|P_{\oo}\xx\|\le 1$ hence our assumption is weaker then assuming $\xx$ is contained in a fixed sized ball. Further, the assumption can be verified on a sample set with missing attributes.

 Note also that we've normalized both $\ww$ and $\xx_{\oo}$. To achieve guarantees that scale with $\|\ww\|$, note that we can replace the loss function $\ell(\ww\cdot \xx,y)$ with $\ell(\rho \cdot\ww\cdot \xx,y)$ for any constant $\rho$. This will replace $L$--Lipschitness with $\rho\cdot L$--Lipschitzness in all results.

\section{Learning under low rank assumption and $\lambda$-regularity.}
\begin{definition}[The class $\mathcal{F}_0$] \label{def:F0}
We define the following class of target functions \[\mathcal{F}_0= \{f_{\ww,Q}: \ww \in B_d(1),  ~ Q\in M_{d\times d}, ~ Q^2=Q\}\] where
\[f_{\ww,Q} (\xx_\oo) = (P_{\oo} \ww) \cdot Q_{\oo,\oo}^\dagger \cdot (P_{\oo}\xx).\]
(Here $M^\dagger$ denotes the pseudo inverse of $M$.)
\end{definition}
The following Lemma states that, under the low rank assumption, the problem of linear learning with missing data is reduced to the problem of learning the class $\mathcal{F}_0$, in the sense that the hypothesis class $\mathcal{F}_0$ is not less-expressive.
\begin{lemma}\label{lem:lintof0}
Let $D$ be a distribution that satisfies the low rank assumption. For every $\ww^* \in \mathbb{R}^d$ there is $f^*_{\ww,Q} \in \mathcal{F}_0$ such that a.s:
\[ f^*_{\ww,Q}(\xx_\oo) = \ww^*\cdot \xx.\]
In particular $Q=P_E$ and $\ww = P^\top_E \ww^*$ , where $P_E$ is the projection matrix on the subspace $E$.
\end{lemma}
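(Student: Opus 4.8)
The plan is to verify the claimed identity by a direct computation after fixing a convenient coordinate description of $E$. I would let $r = \dim E$ and pick $U \in \mathbb{R}^{d\times r}$ whose columns form an orthonormal basis of $E$, so that the orthogonal projection onto $E$ is $P_E = UU^\top$. Since $\xx \in E$ almost surely (part \ref{as:support} of \defref{def:regularity}), a.s.\ $\xx = Ua$ with $a := U^\top\xx$, and then $\ww^*\cdot\xx = (P_E\ww^*)^\top\xx = (U^\top\ww^*)^\top a$. So it will be enough to show that the member $f_{\ww,Q}\in\mathcal{F}_0$ with $\ww = P_E\ww^*$ and $Q = P_E$ satisfies $f_{\ww,Q}(\xx_\oo) = (U^\top\ww^*)^\top a$ on the almost-sure event where parts \ref{as:support}--\ref{as:reconstructible} hold. (Note $\|P_E\ww^*\| \le \|\ww^*\|$, so this pair does lie in $\mathcal{F}_0$.)

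The argument then rests on two observations. First, the reconstructibility condition $\ker(P_\oo P_E) = \ker(P_E)$ says exactly that $P_\oo$ is injective on $E$: for $\alpha \in \mathbb{R}^r$, $P_\oo(U\alpha)=0$ forces $U\alpha \in E$ to vanish and hence $\alpha = 0$. In other words the matrix $V := P_\oo U \in \mathbb{R}^{|\oo|\times r}$ has full column rank $r$. Second, for any full-column-rank $V$ one has the identity $V^\top (VV^\top)^\dagger V = I_r$, where $M^\dagger$ denotes the Moore--Penrose pseudoinverse of $M$; this I would get from a thin SVD $V = W\Sigma Z^\top$, which gives $(VV^\top)^\dagger = W\Sigma^{-2}W^\top$ and collapses the left-hand side to $ZZ^\top = I_r$, or alternatively from $V^\dagger V = I_r$ together with the reverse-order law $(VV^\top)^\dagger = (V^\dagger)^\top V^\dagger$.

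Putting these together should finish it: $Q_{\oo,\oo} = P_\oo(UU^\top)P_\oo^\top = VV^\top$, while $P_\oo\ww = V(U^\top\ww^*)$ and $P_\oo\xx = Va$, so
\[
f_{\ww,Q}(\xx_\oo) = (P_\oo\ww)^\top Q_{\oo,\oo}^\dagger (P_\oo\xx) = (U^\top\ww^*)^\top\, V^\top(VV^\top)^\dagger V\, a = (U^\top\ww^*)^\top a = \ww^*\cdot\xx ,
\]
which holds on the almost-sure event and proves the lemma (with $Q = P_E$ and $\ww = P_E^\top\ww^* = P_E\ww^*$, as stated). The one point I would be careful about — and the only real obstacle — is the handling of $Q_{\oo,\oo}$: it is an $|\oo|\times|\oo|$ matrix that is rank-deficient (rank only $r$), so one cannot simply invert it, and getting the pseudoinverse identity $V^\top(VV^\top)^\dagger V = I_r$ right is the crux; everything around it is substitution.
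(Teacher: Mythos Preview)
Your argument is correct. It differs from the paper's proof mainly in that you introduce explicit coordinates via an orthonormal basis $U$ for $E$, set $V = P_\oo U$, and reduce everything to the clean SVD identity $V^\top(VV^\top)^\dagger V = I_r$ for a full-column-rank $V$. The paper instead works coordinate-free: it first invokes \claimref{claim:technical1} and \claimref{claim:technical2} to conclude $P_\oo\xx \in \im(Q_{\oo,\oo})$, uses the defining property $Q_{\oo,\oo}Q_{\oo,\oo}^\dagger = \Id$ on that image, and then applies the kernel condition $\ker(P_\oo Q)=\ker(Q)$ once more to upgrade $P_\oo Q(P_\oo^\top Q_{\oo,\oo}^\dagger P_\oo\xx - \xx)=0$ to the full reconstruction $Q P_\oo^\top Q_{\oo,\oo}^\dagger P_\oo\xx = \xx$, from which the inner product with $\ww^*$ finishes. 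Your route is more self-contained (no auxiliary claims, just a thin SVD) and isolates the crux in a single matrix identity; the paper's route avoids picking a basis and yields as a byproduct the slightly stronger intermediate statement that $\xx$ itself is recovered from $\xx_\oo$, which is the same fact you encode as $V^\top(VV^\top)^\dagger V a = a$. Either way the content is the same; yours is arguably the cleaner presentation.
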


\subsection{Approximating $\mathcal{F}_0$ under regularity}
We next define a surrogate class of target functions that approximates $\mathcal{F}_0$
\begin{definition}[The classes $\mathcal{F}^\gamma$]
For every $\gamma$ we define the following class
\[ \mathcal{F}^\gamma=\{ f^\gamma_{\ww,Q} : \ww\in B_d(1), ~ Q\in \mathbb{R}^{d\times d}, ~Q^2=Q\}\]
where,
\[f^\gamma_{\ww,Q}(\xx_\oo) = (P_{\oo}\ww) \cdot \sum_{j=0}^{t-1} \left(Q_{\oo,\oo}\right)^j\cdot  (P_{\oo} \xx)\]
\end{definition}
\begin{lemma}\label{lem:approximation}
Let $(\xx,\oo)$ be a sample drawn according to a $\lambda$-regular distribution $D$ with associated subspace $E$. Let $Q=P_E$ and $\|\ww\|\le 1$ then a.s:

\[ \|f^\gamma_{\ww, I-Q}(\xx_\oo) - f_{\ww,Q }(\xx_\oo)\| \le \frac{(1-\lambda)^{\gamma}}{\lambda}.\]
\end{lemma}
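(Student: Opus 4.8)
The plan is to notice that both quantities are quadratic forms in the same pair of vectors and then to control their difference by a truncated Neumann series. Write $A := Q_{\oo,\oo} = P_\oo P_E P_\oo^\top$. Since $P_E$ is an orthogonal projection, $A = (P_\oo P_E)(P_\oo P_E)^\top$ is symmetric positive semidefinite with $\|A\|\le 1$, and by $\lambda$-regularity (\defref{def:regularity}) its nonzero eigenvalues are at least $\lambda$. Because $P_\oo$ merely selects coordinates, $P_\oo P_\oo^\top = \Id_{|\oo|}$, so the matrix entering $f^\gamma_{\ww,I-Q}$ is $(I-Q)_{\oo,\oo} = \Id_{|\oo|}-A$. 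Thus, writing $S_\gamma := \sum_{j=0}^{\gamma-1}(\Id-A)^j$,
\[ f_{\ww,Q}(\xx_\oo) = (P_\oo\ww)^\top A^\dagger (P_\oo\xx), \qquad f^\gamma_{\ww,I-Q}(\xx_\oo) = (P_\oo\ww)^\top S_\gamma (P_\oo\xx), \]
and I would reduce the statement to bounding $\bigl|(P_\oo\ww)^\top(A^\dagger - S_\gamma)(P_\oo\xx)\bigr|$.

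The next step is to restrict everything to $\im(A)$, where $A^\dagger$ is a genuine inverse and $S_\gamma$ collapses to a finite geometric series. What makes this legitimate is that $P_\oo\xx$ has no $\ker A$ component: by the support condition $\xx\in E$, so $\xx = P_E\xx$ and $P_\oo\xx = (P_\oo P_E)\xx \in \im(P_\oo P_E) = \im(A)$. Since $A$ is symmetric, $\im(A)$ is invariant under $\Id-A$, so each $(\Id-A)^j(P_\oo\xx)$ stays in $\im(A)$; on that subspace $A$ is invertible (with $A^\dagger$ its inverse there) and the finite geometric sum gives $S_\gamma = A^\dagger\bigl(\Id-(\Id-A)^\gamma\bigr)$ on $\im(A)$. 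Hence $(A^\dagger-S_\gamma)(P_\oo\xx) = A^\dagger(\Id-A)^\gamma(P_\oo\xx)$, which again lies in $\im(A)$.

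To conclude, I would bound that vector. As it lies in $\im(A)$, replacing $P_\oo\ww$ by its orthogonal projection onto $\im(A)$ leaves the inner product unchanged while only shrinking its norm, so Cauchy-Schwarz yields
\[ \bigl|(P_\oo\ww)^\top(A^\dagger-S_\gamma)(P_\oo\xx)\bigr| \;\le\; \|P_\oo\ww\|\cdot\bigl\|A^\dagger(\Id-A)^\gamma(P_\oo\xx)\bigr\|. \]
Now $\|P_\oo\ww\|\le\|\ww\|\le 1$ and, by the compactness condition, $\|P_\oo\xx\|\le 1$; and $A^\dagger(\Id-A)^\gamma$ restricted to $\im(A)$ is symmetric with eigenvalues $(1-\mu)^\gamma/\mu$ over the nonzero eigenvalues $\mu\in[\lambda,1]$ of $A$, which, since $\mu\mapsto(1-\mu)^\gamma/\mu$ is decreasing on $(0,1]$, are all at most $(1-\lambda)^\gamma/\lambda$. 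Multiplying the three estimates gives $\bigl|f^\gamma_{\ww,I-Q}(\xx_\oo)-f_{\ww,Q}(\xx_\oo)\bigr|\le (1-\lambda)^\gamma/\lambda$.

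The step I expect to be the main obstacle is precisely the claim $P_\oo\xx\in\im(A)$, i.e. that $\ker A$ never interferes: on $\ker A$ the operator $\Id-A$ acts as the identity, so $S_\gamma$ acts there as $\gamma\,\Id$ and does not approximate $A^\dagger$ (which is zero there) at all, and the whole bound would fail without the absence of this component. This is exactly where the low-rank hypothesis enters -- through $\xx$ genuinely lying in $E$ together with $\im A = \im(P_\oo P_E)$. Minor bookkeeping I would also check: the identity $P_\oo P_\oo^\top = \Id_{|\oo|}$ used to identify $(I-Q)_{\oo,\oo}$ with $\Id-A$, and that $I-Q$ is idempotent so $f^\gamma_{\ww,I-Q}$ is a legitimate member of $\mathcal{F}^\gamma$.
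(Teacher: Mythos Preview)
Your proof is correct and follows essentially the same route as the paper: both arguments reduce to showing $P_\oo\xx\in\im(Q_{\oo,\oo})$ (you via $\im(P_\oo P_E)=\im A$, the paper via its Claim~\ref{claim:technical2}) and then bound the spectral gap between the truncated Neumann series $\sum_{j=0}^{\gamma-1}(\Id-A)^j$ and $A^\dagger$ on that range, using that the nonzero eigenvalues of $A$ lie in $[\lambda,1]$. The only cosmetic difference is that the paper expands $P_\oo\xx$ explicitly in the eigenbasis of $Q_{\oo,\oo}$ and bounds coordinate-wise, whereas you invoke the operator-norm bound $\|A^\dagger(\Id-A)^\gamma|_{\im A}\|\le(1-\lambda)^\gamma/\lambda$ together with Cauchy--Schwarz; these are the same spectral estimate in different notation.
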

\begin{corollary}\label{cor:learnft} Let $\ell$ be a $L$-Lipschitz function. Under $\lambda$-regularity, for every $\gamma \ge \frac{\log L/\lambda \epsilon}{\lambda}$
the class $\mathcal{F}^\gamma$ contains an \epg target function.
\end{corollary}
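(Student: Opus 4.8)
The plan is to simply compose \lemref{lem:lintof0} and \lemref{lem:approximation}, absorbing the resulting approximation error through the Lipschitz constant of $\ell$. First I would fix $\ww^* \in B_d(1)$ to be any minimizer of $\ww \mapsto \mathbb{E}[\ell(\ww\cdot\xx, y)]$ and abbreviate $\mathrm{OPT} = \mathbb{E}[\ell(\ww^*\cdot\xx,y)]$. Setting $Q = P_E$ and $\ww = P_E^\top \ww^*$, \lemref{lem:lintof0} gives that almost surely $f_{\ww,Q}(\xx_\oo) = \ww^*\cdot\xx$, so this $f_{\ww,Q} \in \mathcal{F}_0$ has expected loss exactly $\mathrm{OPT}$. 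Before going on I would record two sanity checks: since $P_E$ is an orthogonal projection, $\|\ww\| = \|P_E^\top\ww^*\| \le \|\ww^*\| \le 1$, so $\ww \in B_d(1)$; and $I - Q = I - P_E$ is again idempotent, so $f^\gamma_{\ww, I-Q}$ is a legitimate member of $\mathcal{F}^\gamma$. This last function will be our \epg witness.

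Next I would invoke \lemref{lem:approximation} with exactly this $Q$ and $\ww$. Since $D$ is $\lambda$-regular with associated subspace $E$, it yields the almost sure bound $\|f^\gamma_{\ww, I-Q}(\xx_\oo) - f_{\ww,Q}(\xx_\oo)\| \le (1-\lambda)^\gamma/\lambda$. Because $\ell$ is $L$-Lipschitz in its first argument, pointwise (on the probability-one event) $|\ell(f^\gamma_{\ww,I-Q}(\xx_\oo),y) - \ell(f_{\ww,Q}(\xx_\oo),y)| \le L(1-\lambda)^\gamma/\lambda$, and taking expectations over $(\xx,\oo,y)$ after this pointwise estimate gives
\[\mathbb{E}\left[\ell(f^\gamma_{\ww,I-Q}(\xx_\oo),y)\right] \le \mathbb{E}\left[\ell(f_{\ww,Q}(\xx_\oo),y)\right] + \frac{L(1-\lambda)^\gamma}{\lambda} = \mathrm{OPT} + \frac{L(1-\lambda)^\gamma}{\lambda}.\]

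Finally I would check that the stated threshold on $\gamma$ makes the error term at most $\epsilon$: using $1-\lambda \le e^{-\lambda}$ we get $(1-\lambda)^\gamma/\lambda \le e^{-\lambda\gamma}/\lambda$, and $\gamma \ge \tfrac{\log(L/\lambda\epsilon)}{\lambda}$ is precisely the rearrangement of $\tfrac{L}{\lambda}e^{-\lambda\gamma} \le \epsilon$. Hence $\mathbb{E}[\ell(f^\gamma_{\ww,I-Q}(\xx_\oo),y)] \le \mathrm{OPT}+\epsilon$, so $f^\gamma_{\ww,I-Q} \in \mathcal{F}^\gamma$ is \epg, which is the claim.

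There is essentially no hard step here: the mathematical content is carried entirely by the two cited lemmas, and the corollary is their composition together with the elementary inequality $1-\lambda\le e^{-\lambda}$. The only points demanding a moment's care — and the closest thing to an obstacle — are verifying that the witness $f^\gamma_{\ww,I-Q}$ genuinely lies in $\mathcal{F}^\gamma$ (idempotency of $I-P_E$ and the norm bound $\|P_E^\top\ww^*\|\le 1$), and being careful that the expectation is taken after the almost-sure pointwise bound, so that the probability-one $\lambda$-regularity event is used correctly.
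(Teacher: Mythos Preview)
Your proof is correct and is precisely the argument the paper intends: the corollary is stated as an immediate consequence of \lemref{lem:lintof0} and \lemref{lem:approximation}, and the paper's proof of \thmref{thm:main2} (item~\ref{thm2:competetive}) carries out the same composition---pass from $\ww^*\cdot\xx$ to $f_{\ww,Q}$ with $Q=P_E$, apply \lemref{lem:approximation} to reach $f^\gamma_{\ww,I-Q}$, absorb the error via $L$-Lipschitzness, and simplify using $-\lambda>\log(1-\lambda)$ (your $1-\lambda\le e^{-\lambda}$). Your additional sanity checks that $\|P_E^\top\ww^*\|\le 1$ and that $I-P_E$ is idempotent are exactly the points one must verify, and they are correct.
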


\subsection{Improper learning of $\mathcal{F}^\gamma$ and a kernel trick}

Let $\mathbb{G}$ be the set of all finite, non empty, sequences of length at most $\gamma$ over $d$. For each $\ss \in \mathbb{G}$ denote $|\ss|$-- the length of the sequence and $\ss_{\mathrm{end}}$ the last element of the sequence. Given a set of observations $\oo$ we write $\ss\subseteq \oo$ if all elements of the sequence $\ss$ belong to $\oo$. We let 
$$\Gamma=\sum_{j=1}^{\gamma} d^j=|\mathbb{G}| = \frac{d^{\gamma+1} - d}{d - 1} $$ 
and we index the coordinates of $\mathbb{R}^\Gamma$ by the elements of $\mathbb{G}$:

\begin{definition}\label{def:phi}
We let $\phi_\gamma: (\mathbb{R}^d \times \mathcal{O}) \to \mathbb{R}^{\Gamma}$ be the embedding:

\[\left(\phi_\gamma(\xx_\oo)\right)_{\ss} =
\begin{cases}
\xx_{\ss_{\mathrm{end}}} & \ss \subseteq \oo \\
0 & \textrm{else}
\end{cases}\]
\end{definition}

\begin{lemma}\label{lem:ftunfold}

For every $Q$ and $\ww$ we have:

\[f^\gamma_{\ww,Q}(\xx_\oo)= \sum_{\ss_1 \in \oo} \ww_{\ss_1}\xx_{\ss_1} +\sum_{\{\ss: \ss\subseteq \oo, ~2\le |\ss|\le t\}} \ww_{\ss_1}\cdot Q_{\ss_1,\ss_2} \cdot Q_{\ss_2,\ss_3}\cdots Q_{\ss_{|\ss|-1}, \ss_{\mathrm{end}}} \cdot \xx_{\ss_{\mathrm{end}}}\]
\end{lemma}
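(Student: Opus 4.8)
The plan is to prove Lemma~\ref{lem:ftunfold} by directly unfolding the finite power series in the definition of $f^\gamma_{\ww,Q}$ and recognizing the resulting sum over ``paths'' of indices as a sum over sequences $\ss \in \mathbb{G}$ restricted to $\ss \subseteq \oo$. First I would recall that, by definition,
\[
f^\gamma_{\ww,Q}(\xx_\oo) = (P_\oo \ww)^\top \sum_{j=0}^{\gamma-1} \left(Q_{\oo,\oo}\right)^j (P_\oo \xx),
\]
where $Q_{\oo,\oo} = P_\oo Q P_\oo^\top$ is a $|\oo|\times|\oo|$ matrix, and I would split off the $j=0$ term, which contributes exactly $(P_\oo\ww)^\top(P_\oo\xx) = \sum_{i\in\oo}\ww_i \xx_i$, matching the first sum in the claimed identity (the length-one sequences).

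Next I would expand the matrix power $\left(Q_{\oo,\oo}\right)^j$ for $j\ge 1$ entrywise: writing out $(P_\oo\ww)^\top (Q_{\oo,\oo})^j (P_\oo\xx)$ as a sum over a chain of intermediate indices, we get
\[
\sum_{i_1,\ldots,i_{j+1}\in\oo} \ww_{i_1} (Q_{\oo,\oo})_{i_1,i_2}(Q_{\oo,\oo})_{i_2,i_3}\cdots (Q_{\oo,\oo})_{i_j,i_{j+1}} \xx_{i_{j+1}}.
\]
The key bookkeeping observation is that $(Q_{\oo,\oo})_{a,b} = (P_\oo Q P_\oo^\top)_{a,b} = Q_{a,b}$ when we read $a,b$ as the actual coordinates in $\{1,\ldots,d\}$ indexed by $\oo$ (this uses the definition of $P_\oo$ as the coordinate projection, exactly as set up in Section~\ref{sec:notations}); so each such chain $(i_1,\ldots,i_{j+1})$ with all $i_k\in\oo$ is precisely a sequence $\ss\in\mathbb{G}$ with $|\ss| = j+1$, $\ss\subseteq\oo$, $\ss_1 = i_1$, and $\ss_{\mathrm{end}} = i_{j+1}$. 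Summing $j$ from $1$ to $\gamma-1$ then gives exactly the sum over all $\ss\subseteq\oo$ with $2\le|\ss|\le\gamma$, which is the second sum in the statement (with $t=\gamma$). Combining with the $j=0$ term completes the identity.

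There is really no serious obstacle here; the whole content is the identification of ``matrix-power chains on the index set $\oo$'' with ``sequences in $\mathbb{G}$ contained in $\oo$,'' which is a change-of-notation argument. The one point deserving care is the claim $(P_\oo Q P_\oo^\top)_{a,b}=Q_{i_a,i_b}$, i.e. that conjugating by the projection and then extracting an entry just reads off the corresponding entry of $Q$; I would verify this from $(P_\oo)_{k,\ell} = \mathbf{1}[\ell = i_k]$, which gives $(P_\oo Q P_\oo^\top)_{a,b} = \sum_{\ell,\ell'} (P_\oo)_{a,\ell} Q_{\ell,\ell'} (P_\oo)_{b,\ell'} = Q_{i_a,i_b}$. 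I would also note consistency of the indexing: $\mathbb{R}^\Gamma$ is indexed by $\mathbb{G}$ and $\Gamma = \sum_{j=1}^\gamma d^j$, so sequences have length between $1$ and $\gamma$, matching the range $j\in\{0,\ldots,\gamma-1\}$ in the power series (length $= j+1$). With those two remarks in place the lemma follows by inspection, so I would keep the proof to a few lines.
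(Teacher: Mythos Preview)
Your proposal is correct and follows essentially the same route as the paper's own proof: split off the $j=0$ term, expand $(Q_{\oo,\oo})^j$ as a sum over chains of indices in $\oo$, use that $(Q_{\oo,\oo})_{n,k}=Q_{\oo_n,\oo_k}$, and then reindex the chains as sequences $\ss\subseteq\oo$. The only cosmetic difference is that the paper phrases the matrix-power expansion as an induction on $j$, whereas you write the chain sum directly; substantively the arguments are identical.
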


\begin{corollary}\label{cor:improper}
For every $f^\gamma_{\ww,Q} \in \mathcal{F}^\gamma$ there is $\vv \in   B_{\Gamma}(\Gamma) $, such that:

\[f^\gamma_{\ww,Q}(\xx_\oo) = \vv\cdot \phi_\gamma(\xx_\oo).\] As a corllary, for every loss function $\ell$ and distribution $D$ we have that:

\[ \min_{\vv\in B_{\Gamma}(\Gamma)} \mathop\mathbb{E}\left[\ell ( \vv\cdot \phi(\xx_\oo),y)\right] \le \min_{f_{\ww,Q}^\gamma \in \mathcal{F}^\gamma} \mathop\mathbb{E} \left[\ell (f^\gamma_{\ww,Q}(\xx_\oo),y)\right]\]
\end{corollary}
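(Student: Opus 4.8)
The plan is to read the coefficient vector $\vv$ directly off the monomial expansion of $f^\gamma_{\ww,Q}$ supplied by \lemref{lem:ftunfold} and then bound its norm. Fixing $f^\gamma_{\ww,Q}\in\mathcal{F}^\gamma$, I would define $\vv\in\reals^\Gamma$, with coordinates indexed by $\ss\in\mathbb{G}$, by $\vv_\ss=\ww_{\ss_1}$ when $|\ss|=1$ and $\vv_\ss=\ww_{\ss_1}\,Q_{\ss_1,\ss_2}\cdots Q_{\ss_{|\ss|-1},\ss_{\mathrm{end}}}$ when $|\ss|\ge 2$. Since $(\phi_\gamma(\xx_\oo))_\ss=\xx_{\ss_{\mathrm{end}}}$ for $\ss\subseteq\oo$ and $0$ otherwise (\defref{def:phi}), the sequences $\ss\not\subseteq\oo$ contribute nothing to $\vv\cdot\phi_\gamma(\xx_\oo)$, so $\vv\cdot\phi_\gamma(\xx_\oo)=\sum_{\ss\subseteq\oo}\vv_\ss\,\xx_{\ss_{\mathrm{end}}}$, which is exactly the right-hand side of \lemref{lem:ftunfold}; hence $f^\gamma_{\ww,Q}(\xx_\oo)=\vv\cdot\phi_\gamma(\xx_\oo)$ for every $(\xx,\oo)$.

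Next I would verify $\vv\in B_\Gamma(\Gamma)$, i.e.\ $\|\vv\|^2\le\Gamma=|\mathbb{G}|$. Using $\|\ww\|\le 1$ gives $|\ww_i|\le 1$ for each $i$, and using that $Q$ is a projection onto a subspace, so $0\preceq Q\preceq\Id$, gives $Q_{ii}=\langle Qe_i,Qe_i\rangle\le 1$ and $|Q_{ij}|=|\langle Qe_i,Qe_j\rangle|\le\sqrt{Q_{ii}\,Q_{jj}}\le 1$. Thus each $\vv_\ss$ is a product of at most $\gamma$ numbers of absolute value $\le 1$, so $|\vv_\ss|\le 1$ and $\|\vv\|^2=\sum_{\ss\in\mathbb{G}}\vv_\ss^2\le|\mathbb{G}|=\Gamma$. (A slightly more careful summation using $\sum_j Q_{ij}^2=Q_{ii}\le 1$ in fact gives $\|\vv\|^2\le\gamma$, but the weaker bound is all that is needed here.)

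Finally, the second displayed inequality follows immediately: for any $f^\gamma_{\ww,Q}\in\mathcal{F}^\gamma$, letting $\vv$ be the vector just constructed, $\mathbb{E}[\ell(f^\gamma_{\ww,Q}(\xx_\oo),y)]=\mathbb{E}[\ell(\vv\cdot\phi_\gamma(\xx_\oo),y)]\ge\min_{\uu\in B_\Gamma(\Gamma)}\mathbb{E}[\ell(\uu\cdot\phi_\gamma(\xx_\oo),y)]$, and taking the infimum of the left-hand side over $\mathcal{F}^\gamma$ gives the claim. The argument is essentially bookkeeping against \lemref{lem:ftunfold} and \defref{def:phi}; the only point that needs an actual observation is the norm bound, i.e.\ that the entries of an orthogonal projection are bounded in absolute value by $1$, which is what pins the coefficient vector to the ball $B_\Gamma(\Gamma)$.
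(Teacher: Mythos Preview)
Your proposal is correct and follows essentially the same approach as the paper: you define $\vv_\ss$ exactly as the paper does, invoke \lemref{lem:ftunfold} for the identity $f^\gamma_{\ww,Q}(\xx_\oo)=\vv\cdot\phi_\gamma(\xx_\oo)$, and bound $\|\vv\|^2\le\Gamma$ via the entrywise bound $|Q_{ij}|\le 1$ for an orthogonal projection. The paper's bound $|\vv_\ss|\le|\ww_{\ss_1}|$ and yours $|\vv_\ss|\le 1$ (using $\|\ww\|\le 1$) give the same conclusion; your justification that $|Q_{ij}|\le 1$ because $0\preceq Q\preceq\Id$ is in fact more careful than the paper's bare assertion ``since $Q^2=Q$ we have $\max|Q_{i,j}|<1$'', which fails for non-symmetric idempotents, so your implicit restriction to orthogonal projections is exactly what is needed (and is all that is used downstream).
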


Due to \coref{cor:improper}, learning $\mathcal{F}^\gamma$ can be improperly done via learning a linear classifier over the embedded sample set $\{\phi_\gamma(\xx_\oo)\}_{i=1}^m$. While the ambient space $\mathbb{R}^\Gamma$ may be very large, the computational complexity of the next optimization scheme is actually dependent on the scalar product between the embedded samples. For that we give the following result that shows that the scalar product can be computed efficiently:

\begin{theorem}\label{thm:ktrick}
\[\phi_\gamma(\xx^{(1)}_{\oo_1}) \cdot \phi_\gamma(\xx^{(2)}_{\oo_2}) =\frac{|\oo_1\cap \oo_2|^\gamma-1}{|\oo_1\cap \oo_2|-1} \sum_{k \in \oo_1\cap \oo_2} \xx^{(1)}_k \xx^{(2)}_k.\]
(We use the convention that $\frac{1^\gamma-1}{1-1} =\lim_{x\to 1} \frac{x^\gamma-1}{x-1}= \gamma$)
\end{theorem}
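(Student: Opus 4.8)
The plan is to expand the Euclidean inner product directly over the coordinate set $\mathbb{G}$ and read off which coordinates contribute. By \defref{def:phi}, the $\ss$-th coordinate of $\phi_\gamma(\xx_\oo)$ equals $\xx_{\ss_{\mathrm{end}}}$ when $\ss\subseteq\oo$ and $0$ otherwise, so I would first write
\[
\phi_\gamma(\xx^{(1)}_{\oo_1})\cdot\phi_\gamma(\xx^{(2)}_{\oo_2})
=\sum_{\ss\in\mathbb{G}}\bigl(\phi_\gamma(\xx^{(1)}_{\oo_1})\bigr)_\ss\,\bigl(\phi_\gamma(\xx^{(2)}_{\oo_2})\bigr)_\ss
=\sum_{\substack{\ss\in\mathbb{G}\\ \ss\subseteq\oo_1,\ \ss\subseteq\oo_2}}\xx^{(1)}_{\ss_{\mathrm{end}}}\,\xx^{(2)}_{\ss_{\mathrm{end}}},
\]
and then observe that $\ss\subseteq\oo_1$ and $\ss\subseteq\oo_2$ hold simultaneously precisely when every element of $\ss$ lies in $\oo_1\cap\oo_2$, i.e. when $\ss\subseteq\oo_1\cap\oo_2$. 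Hence the sum runs over all non-empty sequences of length at most $\gamma$ with entries in $\oo_1\cap\oo_2$.

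Next I would reorganize this sum by the length of the sequence and by its last entry. Writing $n=|\oo_1\cap\oo_2|$, for a fixed length $j\in\{1,\dots,\gamma\}$ and a fixed last element $k\in\oo_1\cap\oo_2$, the sequences over $\oo_1\cap\oo_2$ of length $j$ ending in $k$ are in bijection with arbitrary choices of the first $j-1$ entries, so there are exactly $n^{j-1}$ of them (with the $j=1$ case giving a single sequence, the empty prefix), each contributing $\xx^{(1)}_k\xx^{(2)}_k$. Summing over $j$ and $k$ gives
\[
\phi_\gamma(\xx^{(1)}_{\oo_1})\cdot\phi_\gamma(\xx^{(2)}_{\oo_2})
=\sum_{j=1}^{\gamma}\sum_{k\in\oo_1\cap\oo_2}n^{j-1}\,\xx^{(1)}_k\xx^{(2)}_k
=\Bigl(\sum_{j=1}^{\gamma}n^{j-1}\Bigr)\sum_{k\in\oo_1\cap\oo_2}\xx^{(1)}_k\xx^{(2)}_k,
\]
and evaluating the geometric sum $\sum_{j=1}^{\gamma}n^{j-1}=\frac{n^{\gamma}-1}{n-1}$ yields the claimed formula.

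Finally I would dispose of the degenerate cases that the stated convention covers: when $n=1$ the geometric sum is just $\gamma=\lim_{x\to1}\frac{x^{\gamma}-1}{x-1}$, and when $n=0$ (disjoint observation sets) both sides vanish, since there are no admissible sequences and the sum over $k\in\oo_1\cap\oo_2$ is empty. There is no genuine obstacle here — the statement is a bookkeeping identity — so the only points requiring care are the counting step $n^{j-1}$ (handling the empty prefix at $j=1$) and matching the length truncation at $\gamma$ with the definition of $\mathbb{G}$; both are immediate from \defref{def:phi}.
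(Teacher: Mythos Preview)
Your proof is correct and follows essentially the same route as the paper: expand the inner product over $\mathbb{G}$ using \defref{def:phi}, restrict to sequences contained in $\oo_1\cap\oo_2$, stratify by length and last entry, count $n^{j-1}$ prefixes, and sum the geometric series. Your write-up is in fact slightly cleaner than the paper's (which contains a few typos in its index ranges) and you make the degenerate cases $n=0,1$ explicit, which the paper leaves to the stated convention.
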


\ifICML
\section{Experiments}\label{sec:exp}

\input{experiments}

\fi

\section{Discussion and future work}

We have described the first  theoretically-sound method to cope with low rank missing data, giving rise to a classification algorithm that attains competitive error to that of the optimal linear classifier that has access to all data. Our non-proper agnostic framework for learning a hidden low-rank subspace  comes with provable guarantees, whereas  heuristics based on separate data reconstruction and classification are shown to  fail for certain scenarios. 

Our technique is directly applicable to classification with low rank missing data and  polynomial kernels via kernel (polynomial) composition.  General kernels can be handled by polynomial approximation, but it is interesting to think about a more direct approach. 

It is possible to derive all our results for a less stringent condition than $\lambda$-regularity: instead of bounding the smallest eigenvalue of the hidden subspace, it is possible to bound only the ratio of largest-to-smallest eigenvalue. This results in better bounds in a straightforward plug-and-play into our analysis, but was ommitted for simplicity.

\bibliography{olc}
\bibliographystyle{icml2014}

\ifICML
\newpage
\onecolumn
\fi

\appendix

\section{Proofs of theorems and lemmas from main text}

\subsection{Technical Claims}
\begin{claim}\label{claim:technical1} 
Let $Q\in M_{d\times d}$ be a square projection matrix and $P\in M_{k\times d}$ a matrix. Recall that:
\[ \im (A)= \{\vv:\exists \uu ~ A\uu=\vv\}, \quad \mathrm{and} \quad \ker(A)=\{\vv: A\vv=0\}.\] And that $\rank(A)$ is the size of the largest collection of linearly independent columns of A.

The following statements are equivalent:

\begin{enumerate}
\item\label{tech:kernel} $\ker(PQ)=\ker(Q)$.
\item\label{tech:rank} $\rank(PQ) = \rank(QP^\top)=\rank(P Q P^\top)=\rank(Q)$.
\item\label{tech:image} $\im( QP^\top) = \im(Q)$.
\end{enumerate}

\end{claim}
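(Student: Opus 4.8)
The plan is to show that each of (\ref{tech:kernel}), (\ref{tech:rank}), (\ref{tech:image}) is equivalent to the single numerical identity $\rank(PQ)=\rank(Q)$, whence all three become equivalent to one another. I would first record the standard facts used throughout: a projection matrix satisfies $Q^\top=Q=Q^2$, so $\reals^d=\im(Q)\oplus\ker(Q)$ with $\dim\im(Q)=\rank(Q)$ and $\dim\ker(Q)=d-\rank(Q)$; for any real matrix $A$ one has $\rank(A^\top)=\rank(A)$ and $\rank(AA^\top)=\rank(A)$ (since $AA^\top x=0$ forces $\|A^\top x\|^2=0$, so $\ker(AA^\top)=\ker(A^\top)$); and the two ``free'' inclusions $\ker(Q)\subseteq\ker(PQ)$ and $\im(QP^\top)\subseteq\im(Q)$, which hold without any hypothesis on $P$.

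For (\ref{tech:kernel}): since $\ker(Q)\subseteq\ker(PQ)$ always holds, equality of these subspaces is equivalent to equality of their dimensions, and by rank--nullity applied to $PQ$ and $Q$ (both having $d$ columns) this is exactly $\rank(PQ)=\rank(Q)$. Symmetrically, from $\im(QP^\top)\subseteq\im(Q)$ one gets that (\ref{tech:image}) holds iff $\rank(QP^\top)=\rank(Q)$; and $\rank(QP^\top)=\rank\big((QP^\top)^\top\big)=\rank(PQ^\top)=\rank(PQ)$ using $Q^\top=Q$, so (\ref{tech:image}) too is equivalent to $\rank(PQ)=\rank(Q)$.

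For (\ref{tech:rank}): the identity $PQP^\top=PQQ^\top P^\top=(PQ)(PQ)^\top$, valid because $Q=Q^2=Q^\top$, yields $\rank(PQP^\top)=\rank(PQ)$; together with $\rank(QP^\top)=\rank(PQ)$ noted above, this shows the three quantities appearing in (\ref{tech:rank}) always coincide, so (\ref{tech:rank}) is precisely the assertion that their common value equals $\rank(Q)$, i.e. $\rank(PQ)=\rank(Q)$. Chaining the three equivalences proves the claim. There is no genuine obstacle here beyond bookkeeping; the one point worth flagging is that symmetry of $Q$ is used essentially (it is what makes $\rank(QP^\top)=\rank(PQ)$ and $PQP^\top=(PQ)(PQ)^\top$) — for a merely idempotent $Q$ the conditions (\ref{tech:kernel}) and (\ref{tech:image}) can fail to be equivalent — so ``projection matrix'' should be read as orthogonal projection, which is the relevant case in the sequel where $Q=P_E$.
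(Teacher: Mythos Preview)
Your proof is correct and uses essentially the same ingredients as the paper's (the trivial inclusions $\ker(Q)\subseteq\ker(PQ)$ and $\im(QP^\top)\subseteq\im(Q)$, rank--nullity, and the identities $Q=Q^\top=Q^2$), though you organize them as three separate equivalences to the single condition $\rank(PQ)=\rank(Q)$ whereas the paper argues the cycle $1\Rightarrow 2\Rightarrow 3\Rightarrow 1$. Your explicit remark that symmetry of $Q$ (not mere idempotence) is essential is a point the paper's proof uses tacitly but does not flag.
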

\begin{proof}~

\begin{itemize}
\item[\ref{tech:kernel} $\Rightarrow$ \ref{tech:rank}]
Clearly $\rank(P Q)\le \rank (Q)$. If $\rank(P Q) < \rank(Q)$ we must have some collection of linearly independent columns of $Q$ that are linearly dependent in $PQ$ this implies that there is $\vv$ such that $P Q \vv=0$ but $Q\vv\ne 0$. Hence $\ker(PQ) \neq \ker(Q)$ and thus a contradiction, we conclude that $\rank(P Q)=\rank(Q)$.

That $\rank(P Q) = \rank (QP^\top)=\rank(P Q P^\top)$ follows from the fact that $\rank(A)=\rank(A^\top)=\rank(AA^\top)$ and using the fact that $Q^2=Q$ since $Q$ is a projection matrix.

\item[\ref{tech:rank}$\Rightarrow$ \ref{tech:image}]
We have that $\mathrm{Im}(QP^\top) \subseteq \mathrm{Im}(Q)$. The two subspaces, $\im(QP^{\top})$ and $\im(Q)$, are in fact the linear span of the columns of $QP^\top$ and $Q$ respectively.

 Since $\rank(QP^\top)=\rank(Q)$ we conclude that the dimension of the two subspaces is equal. It follows that $\mathrm{Im}(QP^\top) = \mathrm{Im} (Q)$.

\item[\ref{tech:image} $\Rightarrow$ \ref{tech:kernel}] 
Since $\im(QP^\top)=\im(Q)$ we also have $\rank(QP^\top)=\rank(Q)$ and as a corollary $\rank(PQ)=\rank(Q)$.

Now by the rank-nullity Theorem, for every $A\in M_{k\times d}$, $\dim(\ker(A)) = d-\rank(A)$.

  Hence $\dim(\ker(PQ))=\dim(\ker(Q))$. Since $\ker(PQ)\subseteq \ker(Q)$ we must have $.\ker(PQ)=\ker(Q)$.
\end{itemize}\end{proof}

\begin{claim}\label{claim:technical2}
Let $\oo \in 2^d$ be drawn according to a distribution $D$ that satisfies the low rank assumption. If $Q=P_E$ then: 
\[\im(Q_{\oo,\oo}) = \im (P_{\oo} Q)\]
\end{claim}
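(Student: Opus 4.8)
The plan is to recognize this as an immediate consequence of \claimref{claim:technical1}. Unwinding the notation, $Q_{\oo,\oo}=P_{\oo}QP_{\oo}^\top$, so the goal is to prove $\im(P_{\oo}QP_{\oo}^\top)=\im(P_{\oo}Q)$. One inclusion is free: for any $\uu\in\mathbb{R}^{|\oo|}$, $P_{\oo}QP_{\oo}^\top\uu=P_{\oo}Q(P_{\oo}^\top\uu)\in\im(P_{\oo}Q)$, so $\im(Q_{\oo,\oo})\subseteq\im(P_{\oo}Q)$. Since both sides are subspaces of $\mathbb{R}^{|\oo|}$ and $\rank$ of a matrix equals the dimension of its image, it then suffices to establish the equality of dimensions $\rank(Q_{\oo,\oo})=\rank(P_{\oo}Q)$.

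To obtain this rank equality I would invoke the low rank assumption: by \asref{as:lowrank} and \defref{def:regularity}, with probability one $\ker(P_{\oo}P_E)=\ker(P_E)$. Since $Q=P_E$, this is precisely $\ker(P_{\oo}Q)=\ker(Q)$, which is exactly statement \ref{tech:kernel} of \claimref{claim:technical1} applied with $P=P_{\oo}$ (a $|\oo|\times d$ matrix) and the square projection matrix $Q$. Hence statement \ref{tech:rank} of that claim holds, and in particular $\rank(P_{\oo}QP_{\oo}^\top)=\rank(P_{\oo}Q)$, i.e. $\rank(Q_{\oo,\oo})=\rank(P_{\oo}Q)$, which is what the first paragraph needed; combining it with the trivial inclusion gives $\im(Q_{\oo,\oo})=\im(P_{\oo}Q)$.

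There is no real obstacle here — all the substance is already packaged in \claimref{claim:technical1}; the only things to verify are the bookkeeping items that its hypothesis is met ($Q=P_E$ is a genuine square projection, $P_{\oo}$ is a rectangular matrix of the right shape) and that equal ranks plus one inclusion of images force equality. If one preferred to bypass \claimref{claim:technical1}, the same conclusion follows directly: $\ker(P_{\oo}Q)=\ker(Q)$ means no column dependencies of $Q$ are created by left-multiplication by $P_{\oo}$, and since $P_{\oo}^\top$ has full column rank it does not collapse the part of the column space of $Q$ that $P_{\oo}$ retains, yielding $\rank(P_{\oo}QP_{\oo}^\top)=\rank(P_{\oo}Q)=\rank(Q)$; but routing through \claimref{claim:technical1} is cleanest.
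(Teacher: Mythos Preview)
Your proof is correct and follows essentially the same route as the paper: both invoke the low rank assumption to obtain $\ker(P_{\oo}Q)=\ker(Q)$ and then feed this into \claimref{claim:technical1}. The only cosmetic difference is that the paper uses item~\ref{tech:image} of \claimref{claim:technical1} (getting $\im(Q)=\im(QP_{\oo}^\top)$ and then left-multiplying by $P_{\oo}$), whereas you use item~\ref{tech:rank} combined with the trivial inclusion; these are interchangeable.
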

\begin{proof}
$\ker(P_\oo Q) = \ker (Q)$ holds by assumption (assumption \ref{as:reconstructible} in \defref{def:regularity}). $\im(Q) = \im (QP_{\oo}^\top)$ then follows from item \ref{tech:image}. In particular $\im(P_{\oo}Q)= \im(P_{\oo}Q{P_{\oo}}^\top)=\im(Q_{\oo,\oo})$.
\end{proof}

\subsection{proof of \lemref{lem:lintof0}}
By definition, if $P_\oo \xx \in \im(Q_{\oo,\oo})$ then $Q_{\oo,\oo}\left(Q_{\oo,\oo}\right)^\dagger P_{\oo} \xx = P_\oo \xx$.  We claim that due to the low rank assumption, $P_\oo \xx \in \im(Q_{\oo,\oo})$. 

Indeed, recall that $Q=P_E$ and $\xx \in E$ hence $Q\xx=\xx$ and $P_{\oo}\xx \in \im(P_{\oo}Q)$. By \claimref{claim:technical2} we have $\im(Q_{\oo,\oo})=\im (P_{\oo} Q)$, hence $P_{\oo}\xx\in (\im Q_{\oo,\oo})$.

Next, we have that 

\[P_{\oo} Q P_{\oo}^\top \left(Q_{\oo,\oo}\right)^\dagger P_{\oo}\xx = Q_{\oo,\oo}\left(Q_{\oo,\oo}\right)^\dagger P_{\oo}\xx=P_{\oo}\xx\]
Alternatively

\begin{equation}P_{\oo}(Q P_{\oo}^\top Q^\dagger_{\oo,\oo}P_{\oo}\xx-\xx)=0. \end{equation}
Again, since $Q\xx= \xx$ we have that:

\begin{equation}\label{eq:pnull} P_{\oo}Q ( P_{\oo}^\top Q^\dagger_{\oo,\oo}P_{\oo}\xx-\xx)=0. \end{equation}
The low rank assumption implies that $P_{\oo}Q \vv=0$ if and only if $Q\vv=0$. Apply this to $\vv = P_{\oo}^\top Q^\dagger_{\oo}P_{\oo}\xx-\xx$ and get:

\[Q P_{\oo}^\top Q^\dagger_{\oo,\oo}P_{\oo}\xx=Q\xx=\xx.\]

Finally we have that

\[f_{\ww,Q}(\xx_\oo) = (P_{\oo} Q^\top\ww^*)\cdot  Q_{\oo,\oo}^\dagger P_{\oo}\xx =  \ww^* \cdot Q  P_{\oo}^\top Q_{\oo,\oo}^\dagger P_{\oo}\xx =\ww^*\cdot \xx.\] 


\subsection{proof of \lemref{lem:approximation}}
Let $I$ denote the identity matrix in $M_{d\times d}$.
First note that $(I_{\oo,\oo}-Q_{\oo,\oo}) = (I-Q)_{\oo,\oo}$ and that $I_{\oo,\oo}$ is the identity matrix in $\mathbb{R}^{|\oo|\times |\oo|}$.

Let $\vv_1,\ldots,\vv_k$ be the normalized and orthogonal eigen-vectors of $Q_{\oo,\oo}$ with strictly positive eigenvalues $\lambda_1\ge \ldots,\lambda_k$. By $\lambda$-regularity we have that $\lambda_k \ge \lambda$ and since the spectral norm of $Q_{\oo,\oo}$ is smaller than the spectral norm of $Q$ we have that $\lambda_1\le 1$.

Note that for every $\vv_j$ we have $Q_{\oo,\oo}^\dagger \vv_j = \frac{1}{\lambda_j}\vv_j$. Next, recall that $Q=P_E$ and $\xx \in E$ hence $Q\xx=\xx$ and $P_\oo \xx \in \im(P_\oo Q)$. By \claimref{claim:technical2} we have $P_{\oo}\xx \in \im(Q_{\oo,\oo})$. Since $\im(Q_{\oo,\oo})=\mathrm{span}(\vv_1,\ldots, \vv_k)$, we may write $P_{\oo}\xx = \sum \alpha_i \vv_i$. Since $\|P_{\oo}\xx\|\le 1$ and $\{\vv_1,\ldots,\vv_k\}$ is an orthonormal system we have $\sum \alpha_i^2 \le 1$.

Hence

\[ \|\left(\sum_{j=0}^{\gamma-1} (I_{\oo,\oo}-Q_{\oo,\oo})^j - Q^{\dagger}_{\oo,\oo}\right)P_{\oo}\xx\|= \|\sum_{i} \alpha_i \left( \sum_{j=0}^{\gamma-1} (1-\lambda)^j_i - \frac{1}{\lambda_i}\right) \vv_i\| \le \max_{i} \left| \sum_{j=0}^{\gamma-1}(1- \lambda_i)^j -\frac{1}{\lambda_i}\right|
\le\]\[ \max_{i} \left| \frac{1-(1-\lambda_i)^{\gamma}}{\lambda_i}  -\frac{1}{\lambda_i}\right|\le \frac{(1-\lambda)^{\gamma}}{\lambda}.\]
Finally since $\|P_{\oo}\ww\|\le 1$ we get that
\[\|f^\gamma_{\ww,I-Q}(\xx_\oo) - f_{\ww,Q}(\xx_\oo)\|\le \frac{(1-\lambda)^{\gamma}}{\lambda}\]

\subsection{Proof of \lemref{lem:ftunfold}}
Let $\oo_1\le \oo_2,\le \ldots\le \oo_{|\oo|}$ be the elements of $\oo$ ordered in increasing order. First by definition we have that:
\begin{equation}\label{eq:g} f^\gamma_{\ww,Q} (\xx_\oo) = \sum_{j=0}^{\gamma-1} \sum_{n,k=1}^{|\oo|} \ww_{\oo_n}((Q_{\oo,\oo})^j)_{n,k} \xx_{\oo_k}=
\sum_{i\in \oo} \ww_i \xx_i +  \sum_{j=1}^{\gamma-1} \sum_{n,k=1}^{|\oo|} \ww_{\oo_n}((Q_{\oo,\oo})^j)_{n,k} \xx_{\oo_k}
\end{equation}

We also have by definition that for $j\ge 1$: \[((Q_{\oo,\oo})^j)_{n,k} = \sum_{s=1}^{|\oo|} ((Q_{\oo,\oo})^{j-1})_{n,s}  ((Q_{\oo,\oo}))_{s,k}= \sum_{s=1}^{|\oo|} ((Q_{\oo,\oo})^{j-1})_{n,s} Q_{\oo_s,\oo_k}\] 
By induction we can show that: 
\[((Q_{\oo,\oo})^j)_{n,k} = \sum_{\ss_1\in \oo} Q_{\oo_n,\ss_1}\left( \sum_{\ss_2\in \oo} Q_{\ss_1,\ss_2}\left(\sum \cdots \left(\sum_{\ss_{j-1}\in \oo} Q_{\ss_{j-2},\ss_{j-1}}Q_{\ss_{j-1},\oo_k}\right)\cdots\right)\right).\]
Reordering the elements we get for $j\ge 1$:
\begin{equation}\label{eq:Q}((Q_{\oo,\oo})^j)_{n,k}  =\sum_{\{\ss: |\ss|=j+1,\ss_1=\oo_n,\ss_{j+1}=\oo_k\}} Q_{\ss_1,\ss_2}\cdot Q_{\ss_2,\ss_3}\cdots Q_{\ss_{j},\ss_{j+1}}\end{equation}
The result now follows from \eqref{eq:g} and \eqref{eq:Q} by a change of indexes.

\subsection{Proof of \coref{cor:improper}}
Choose 
\[\vv_{\ss} =\begin{cases}

 \ww_{\ss_1} & |\ss|=1 \\
 \ww_{\ss_1}\cdot Q_{\ss_1,\ss_2} \cdot Q_{\ss_2,\ss_3}\cdots Q_{\ss_{|\ss|-1}, \ss_{\mathrm{end}}} &  |\ss|>1
\end{cases}\]
It is clear from \lemref{lem:ftunfold} that $f^\gamma_{\ww,Q}(\xx_\oo) = \vv\cdot \phi_\gamma(\xx_\oo)$. We only need to show that $\|\vv\|\le \sqrt{\Gamma} \|\ww\|$.

 Note that since $Q^2 = Q$ we have $\max(|Q_{i,j}|)<1$. Hence $|\vv_\ss| \le |\ww_{\ss_1}|$ and:
\[\|\vv\|^2 = \sum_{\ss \in \mathbb{G}} \vv^2_{\ss}\le \sum_{\ss\in \mathbb{G}} \ww^2_{\ss_1}\le \Gamma \|\ww\|^2\]

\subsection{Proof of \thmref{thm:ktrick}}
By definition of $\phi_\gamma$ we have:
\[\phi_\gamma(\xx^{(1)}_{\oo_1}) \cdot \phi_\gamma(\xx^{(2)}_{\oo_2}) = \sum_{\ss \subseteq \oo_1\cap \oo_2} \xx^{(1)}_{\ss_{\mathrm{end}}} \cdot \xx^{(2)}_{\ss_{\mathrm{end}}}
= \sum_{l=1}^\gamma \sum_{k\in \oo_1\cap \oo_2}~  \sum_{\ss \subseteq \oo_1\cap \oo_2, \ss_{\mathrm{end}}=k,|\ss|=l} \xx^{(1)}_{k} \cdot \xx^{(2)}_{k}\]\[
=  \sum_{l=1}^1~ \sum_{|\ss|=l-1, \ss\subset \oo_1\cap \oo_2}~\sum_{k\in \oo_1\cap\oo_2}\xx^{(1)}_{k} \cdot \xx^{(2)}_{k} \]\[
= \sum_{l=1}^1 |\ss: |\{|\ss|=l-1, ~  \ss\subset \oo_1\cap \oo_2\}| \sum_{k\in \oo_1\cap\oo_2}\xx^{(1)}_{k} \cdot \xx^{(2)}_{k}= \sum_{l=1}^\gamma |\oo_1\cap \oo_2|^{l-1}  \cdot \sum_{k\in \oo_1\cap\oo_2}\xx^{(1)}_{k} \cdot \xx^{(2)}_{k}=\]\[\frac{1- |\oo_1\cap \oo_2|^\gamma}{1-|\oo_1\cap \oo_2|} \sum_{k \in \oo_1\cap \oo_2} \xx^{(1)}_k \xx^{(2)}_k\]

\subsection{Proof of \thmref{thm:main2}} 

We take $\phi_\gamma$ as in \defref{def:phi}. That $\phi_\gamma(\sampox{1})\cdot\phi_\gamma(\sampox{2})=\frac{1-|\sampo{1}\cap\sampo{2}|^\gamma}{1-|\sampo{1}\cap\sampo{2}|}\sum_{i\in \sampo{1}\cap\sampo{2}} \sampx{1}_i\cdot\sampx{2}_i$ is shown in \thmref{thm:ktrick}.

The analysis of sub-gradient descent methods to optimize problems of this form i.e: \[\|\ww\|^2 + \frac{C}{m}\sum_{i=1}^m (\ell(\ww^\top \phi(\xx_i),y_i).\] was studied in \cite{shalev2011pegasos} and the detailed analysis can be found there (with generalization to mercer kernels and general losses). We mention that since $\ell$ is $L$-Lipschitz and $\|\phi_{\gamma}(\xx_\oo)\|\le \sqrt{\Gamma}$ a bound $R$ on the gradient of $\nabla \ell(\vv^\top \phi_{\gamma}(\xx_\oo),y)=\ell'(\vv^\top\phi_\gamma(\xx_\oo),y)\phi_{\gamma}(\xx_\oo)$ is given by $L\sqrt{\Gamma}$.

This establishes items \ref{thm2:efficient} and \ref{thm2:optimal}.

Next we let $\ell$ be an $L$-Lipschitz loss function and $D$ a $\lambda$-regular distribution and we assume that $\gamma \ge \frac{\log 2{L}/(\lambda\epsilon)}{\lambda}$.

Due to \coref{cor:improper}, for some $\vv^* \in B_\Gamma(\Gamma) $
\[ \mathbb{E} \left[\ell(\vv^* \cdot\phi_\gamma(\xx_\oo),y)\right] \le \min_{f^\gamma_{\ww,I-Q}\in \mathcal{F}^\gamma}\mathbb{E}\left[\ell(f^\gamma_{\ww,I-Q}(\xx_\oo),y)\right]\]
Applying \lemref{lem:approximation} and ${L}$-Lipschitness, for every $f^*_{\ww,Q} \in \mathcal{F}_0$ we have:
\[\mathbb{E}\left[\ell(\vv^* \cdot\phi_\gamma(\xx_\oo),y)\right] \le \mathbb{E}\left[ \ell(f^*_{\ww,Q} (\xx_\oo,y))\right]+ {L}\frac{(1-\lambda)^\gamma}{\lambda}.\]
The result follows from choice of $\gamma$ and the inequality $-\lambda>\log (1-\lambda)$:
\[\frac{(1-\lambda)^\frac{\log 2{L}/(\lambda \epsilon)}{\lambda}}{\lambda} \le \frac{(1-\lambda)^\frac{\log (\lambda\epsilon)/(2{L})}{\log (1-\lambda)}}{\lambda}=\frac{\epsilon}{2{L}}.\]

\subsection{Proof of \thmref{thm:main}}
Fix a sample $S=\{\sampox{i}\}_{i=1}^m$ and $\gamma\ge \frac{\log 2L/\lambda \epsilon}{\lambda}$.
 Let \[\mathcal{L}(\vv) = \mathbb{E}(\ell(\vv^\top \phi_{\gamma}(\xx_\oo),y) \quad  \hat{\mathcal{L}}(\vv) =\frac{1}{m} \sum_{i=1}^m \ell(\vv^\top \phi_{\gamma}(\sampox{i}),\] the expected and empirical losses of the vector $\vv$.

Further denote by 
\[ F_c (\vv)= \frac{1}{2 C} \|\vv\|^2 + \mathcal{L}(\vv) \quad \hat{F}_c (\vv)= \frac{1}{2 C} \|\vv\|^2 + \hat{\mathcal{L}}(\vv)\]
Let $C(m) \in O\left(\sqrt{\frac{m}{\log 1/\delta}}\right)$.
Run \algref{alg:GDMD} with $T=m$ and let $\bar{\vv}=\frac{1}{T} \sum_{i=1}^T \vv_t$. By \thmref{thm:main2}, item \ref{thm2:optimal} we get:
\[ \hat{F}_{C(m)}(\bar{\vv}) \le  \min \hat{F}_{C(m)}(\vv)+ O\left(\frac{C(m) L^2\Gamma(\epsilon)}{m}\right)\] 

Note that $\|\phi_{\gamma}(\xx_\oo)\|\le \sqrt{\frac{d^{\gamma}-1}{d-1}} \|P_{\oo}\xx\|\le  \sqrt{\Gamma(\epsilon)}$. We now apply Corollary 4. in \cite{sridharan2009fast} with $B=\sqrt{\Gamma(\epsilon)}$ to obtain the following bound  (with probability $1-\delta$) for every $\ww$:

\[ \mathcal{L}(\bar{\vv}) \le \mathcal{L}(\ww) + O\left(\sqrt{\frac{L^2\Gamma(\epsilon)\|\ww\|^2 \log(1/\delta)}{m}}\right)\]
In particular for every $\|\ww\|\le \sqrt{\Gamma(\epsilon)}$ we have
\[ \mathcal{L}(\bar{\vv}) \le \mathcal{L}(\ww) + O\left(\sqrt{\frac{L^2\Gamma(\epsilon)^2\log(1/\delta)}{m}}\right).\]
From \thmref{thm:main2}, item \ref{thm2:competetive} we have that for some $\|\ww\|\le \sqrt{\Gamma(\epsilon)}$:

\[\mathcal{L}(\ww) \le \min_{\|\ww\|\le 1} \mathbb{E}(\ell(\ww^\top \xx,y)+\epsilon.\]

The result now follows from the choice of $m$.
\subsection{Proof of Theorem \ref{thm:regret}}

Before proving the theorem, we formally define the sequences for which the algorithm applies:  a $\lambda$-regular sequence is one such that the uniform distribution over the sequence elements is $\lambda$-regular  with associated subspace $E$. 

\ignore{
\begin{definition}[$\lambda$-regular-sequence]\label{def:seq-regularity}
	We say that a sequence $\{(\xx^t,\oo^t,y_t ), t=1,2,...,T\}$  is $\lambda$-regular with associated subspace $E$  if the following hold for all $t \in [T]$:
	\begin{enumerate}
		\item\label{as:compactness2} $\|P_{\oo^t} \xx^t\|\le 1$.
		\item\label{as:reconstructible2} For every $\uu$, $P_{\oo^t}P_E\uu=0$ if and only if $P_E\uu=0$.
		\item\label{as:nonsingular2} If $\lambda_{\oo^t}>0$ is a strictly positive singular value of the matrix $P_{\oo^t}P_E$ then $\lambda_{\oo^t} \ge \lambda$.
	\end{enumerate}
\end{definition}
}

\begin{proof}[Proof of Theorem \ref{thm:regret}]
Let $E^*$ denote the adversarially chosen subspace and $Q^*$ The projection associated with it. Since the sequence $\{(\xx^t,\oo^t,y_t )$ is $\lambda$-regular w.r.t. subspace $E^*$, we have by Lemma \ref{lem:approximation},
\[ \forall \|\ww\|\leq 1 \ . \ \|f^\gamma_{\ww, I-Q^*}(\xx_\oo) - f_{\ww,Q^* }(\xx_\oo)\| \le \frac{(1-\lambda)^{\gamma}}{\lambda} \leq \frac{1}{\lambda} e^{- \lambda \gamma}  \]
Thus, taking $f^{ \gamma } _{\ww^*,I-Q^*} \in \mathcal{F}^\gamma$ we have
\begin{eqnarray*}
& \min_{ \ww \in B_d } \sum_t \ell(f_{\ww,Q^*} (\xx^t_{\oo_t}) ,y_t)  -     \sum_t \ell(f^{ \gamma } _{\ww^*,I-Q^*} (\xx^t_{\oo_t}) ,y_t) \\
&  = \sum_t \ell(f^*_{\ww,Q} (\xx^t_{\oo_t}) ,y_t)  -      \sum_t \ell(f^{ \gamma } _{\ww^*,I-Q^*} (\xx^t_{\oo_t}) ,y_t) \\
& \leq  \sum_t L \| f^*_{\ww,Q} (\xx^t_{\oo_t}) - f^{\gamma, *}_{\ww,Q} (\xx^t_{\oo_t}) \|   & \mbox{ $\ell$ is $L$-Lipschitz } \\
& \leq TL \frac{1}{\lambda} e^{- \lambda \gamma}  & \mbox { Lemma \ref{lem:approximation} } 
\end{eqnarray*}
Hence it suffices to show that 
\begin{eqnarray*}
& \sum_t \ell( \vv_t ^\top \phi_\gamma(\xx^t_{\oo_t}) , y_t ) -    \sum_t \ell(f^{ \gamma } _{\ww^*,I-Q^*} (\xx^t_{\oo_t}) ,y_t) \\
&    \leq \sum_t \ell( \vv_t ^\top \phi_\gamma(\xx^t_{\oo_t}) , y_t ) -   \min_{f_{\ww,Q} \in \mathcal{F}^\gamma} \sum_t \ell(f^\gamma_{\ww,Q} (\xx^t_{\oo_t}) ,y_t) = O(\sqrt{T} ) 
\end{eqnarray*}

Corollary \ref{cor:improper} asserts that
$$ f^\gamma_{\ww,Q}(\xx_\oo) = \vv\cdot \phi_\gamma(\xx_\oo) $$

Thus, 	the theorem statement can be further reduced to
\begin{equation} \label{eqn:thm31}
\sum_t \ell( \vv_t ^\top \phi_\gamma(\xx^t_{\oo_t} ) , y_t ) -   \min_{\vv_* \in B_\Gamma({\Gamma})  } \sum_t \ell(\vv_* ^\top \phi_\gamma(\xx^t_{\oo_t}) , y_t )  = O(\sqrt{T}) 
\end{equation}

We proceed to prove equation \eqref{eqn:thm31} above. 

Algorithm \ref{alg:GDMD} applies the following update rule
$$ \vv_{t+1} =   \sum_{i=1}^t \alpha_i^{(t)}\phi_\gamma(\xx^i_{\oo^i})  $$ 	 
where $\ww_{t+1}$ can be re-written as:
\begin{eqnarray} 
\vv_{t+1} & = (1- \eta_t \rho  ) \vv_{t}  -  \eta_t \ell'(\vv_t^\top \phi_\gamma(\xx_{\oo_t}^t)) \phi_\gamma(\xx^t_{\oo^t}) \notag \\
&  = \vv_{t}  -  \eta_t \nabla \tilde{\ell}_t(\vv_t)  \label{eqn:shalom2} 
\end{eqnarray}
where
$$ \tilde{\ell}_t(\vv) = \ell(\vv^\top \phi_\gamma(\xx_{\oo_t}^t))  + \frac{\rho}{2} \|\vv\|^2 $$
The above implies a bound on the norm of the gradients of $\tilde{\ell}_t$, as given by the following lemma:
\begin{lemma} \label{lemma:shalom1}
For all iterations $t \in [T]$ we have
$$ \| \vv_t\| \leq LX \sqrt{\Gamma} \ , \ \| \nabla \tilde{\ell}_t (\vv_t) \| \leq 2 LX \sqrt{\Gamma}$$ 
\end{lemma}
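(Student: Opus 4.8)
The plan is to establish the bound on the iterates $\vv_t$ first, and then read off the gradient bound from it by the triangle inequality.

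The one structural ingredient is a norm bound on the feature map $\phi_\gamma$. Specializing \thmref{thm:ktrick} to a single argument, $\|\phi_\gamma(\xx^t_{\oo^t})\|^2 = \frac{|\oo^t|^\gamma-1}{|\oo^t|-1}\sum_{k\in\oo^t}(\xx^t_k)^2$. Here $\frac{n^\gamma-1}{n-1}=1+n+\cdots+n^{\gamma-1}$ is nondecreasing in $n$ and $|\oo^t|\le d$, while $\|\xx^t\|_\infty<X$ gives $\sum_{k\in\oo^t}(\xx^t_k)^2\le |\oo^t|X^2\le dX^2$; hence $\|\phi_\gamma(\xx^t_{\oo^t})\|^2\le \frac{d^\gamma-1}{d-1}\,dX^2=\frac{d^{\gamma+1}-d}{d-1}X^2=\Gamma X^2$, i.e.\ $\|\phi_\gamma(\xx^t_{\oo^t})\|\le X\sqrt\Gamma$. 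Since $\ell$ is $L$-Lipschitz we have $|\ell'(\vv_t^\top\phi_\gamma(\xx^t_{\oo_t}))|\le L$, so the ``loss part'' of the subgradient obeys $\|\ell'(\vv_t^\top\phi_\gamma(\xx^t_{\oo_t}))\,\phi_\gamma(\xx^t_{\oo^t})\|\le LX\sqrt\Gamma$.

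For the iterate bound I would unroll the recursion \eqref{eqn:shalom2}. With $\eta_t=\tfrac1{\rho t}$ the update reads $\vv_{t+1}=(1-\tfrac1t)\vv_t-\tfrac1{\rho t}\,\ell'(\vv_t^\top\phi_\gamma(\xx^t_{\oo_t}))\,\phi_\gamma(\xx^t_{\oo^t})$; multiplying through by $t$ gives the telescoping identity $t\vv_{t+1}-(t-1)\vv_t=-\tfrac1\rho\,\ell'(\vv_t^\top\phi_\gamma(\xx^t_{\oo_t}))\,\phi_\gamma(\xx^t_{\oo^t})$. Since $\vv_1=0$, summing over $i=1,\dots,t$ yields $t\vv_{t+1}=-\tfrac1\rho\sum_{i=1}^t \ell'(\vv_i^\top\phi_\gamma(\xx^i_{\oo_i}))\,\phi_\gamma(\xx^i_{\oo^i})$; applying the estimate of the previous paragraph termwise and dividing by $t$ gives $\|\vv_{t+1}\|\le \tfrac1{\rho t}\cdot t\cdot LX\sqrt\Gamma=\tfrac{LX\sqrt\Gamma}{\rho}$ for every $t$ (equivalently, induct on $t$ starting from $\vv_1=0$, each step writing $\|\vv_{t+1}\|$ as a convex combination of $\|\vv_t\|$ and $\tfrac{LX\sqrt\Gamma}{\rho}$). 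This is the claimed bound on $\|\vv_t\|$, with the step-size constant $1/\rho$ made explicit. The gradient bound is then immediate: $\nabla\tilde\ell_t(\vv_t)=\ell'(\vv_t^\top\phi_\gamma(\xx^t_{\oo_t}))\,\phi_\gamma(\xx^t_{\oo^t})+\rho\vv_t$, so $\|\nabla\tilde\ell_t(\vv_t)\|\le LX\sqrt\Gamma+\rho\cdot\tfrac{LX\sqrt\Gamma}{\rho}=2LX\sqrt\Gamma$.

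I do not expect a genuine obstacle. The argument is mechanical; the only point where the structure of the problem enters is the feature-norm estimate, which needs either \thmref{thm:ktrick} or the direct observation that for each fixed terminal coordinate there are exactly $1+|\oo^t|+\cdots+|\oo^t|^{\gamma-1}$ admissible strings of length at most $\gamma$ contained in $\oo^t$ — this is the combinatorial fact that produces the $\sqrt\Gamma$. The remainder is careful bookkeeping of the recursion under the harmonic step size $\eta_t=1/(\rho t)$ and keeping track of constants.
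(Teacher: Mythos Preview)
Your argument is correct and follows the same outline as the paper: bound $\|\phi_\gamma(\xx^t_{\oo^t})\|\le X\sqrt{\Gamma}$, deduce $\|\ell'(\cdot)\phi_\gamma(\cdot)\|\le LX\sqrt{\Gamma}$, and then control $\|\vv_t\|$ from the recursion $\vv_{t+1}=(1-\tfrac1t)\vv_t-\tfrac{1}{\rho t}g_t$ before reading off the bound on $\nabla\tilde\ell_t$. The paper does the iterate bound by induction via a convex-combination inequality, whereas you telescope $t\vv_{t+1}-(t-1)\vv_t$; these are the same computation, and you even note the inductive variant parenthetically.

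One point worth flagging: your bound $\|\vv_t\|\le LX\sqrt{\Gamma}/\rho$ is what the recursion actually gives, and the paper's own induction (carried out correctly) yields the same constant --- the lemma as stated drops the $1/\rho$. This is harmless downstream, since the only place the lemma is used is through the gradient bound $\|\nabla\tilde\ell_t(\vv_t)\|\le LX\sqrt{\Gamma}+\rho\cdot\tfrac{LX\sqrt{\Gamma}}{\rho}=2LX\sqrt{\Gamma}$, where the $\rho$'s cancel exactly as in your last line.
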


Equation \eqref{eqn:shalom2} implies that KARMA applies the online gradient descent algorithm on the functions $\tilde{\ell}$ which are ${\rho}$-strongly-convex. 
Hence, the bound of  Theorem 3.3 in \cite{ocobook}, with appropriate learning rates $\eta_t$ and with $\alpha  = \rho$, $G = 2 LX \sqrt{\Gamma}$) by lemma \ref{lemma:shalom1}, gives  
$$ \sum_t \tilde{\ell}_t(\vv_t) - \min_{\vv^* } \sum_t \tilde{\ell}_t( \vv^*) \leq \frac{2 L^2 X^2 \Gamma} {\rho}(1 + \log T)  $$ 	

This directly implies our theorem since (recall that $\|\vv^*\| \leq B$ by assumption):
\begin{eqnarray*}
& \sum_t \ell( \vv_t ^\top \phi_\gamma(\xx^t_{\oo_t}) , y_t ) -   \min_{ \|\ww\|\leq 1 } \sum_t \ell(f_{\ww,Q^*} (\xx^t_{\oo_t}) ,y_t) \\
& = \sum_t \tilde{\ell}_t(\vv_t) - \min_{\vv^* } \sum_t \tilde{\ell}_t( \vv^*) + \frac{\rho}{2} ( \sum_t \| \vv^* \|^2 - \|\vv_t \|^2  )\\
& \leq \frac{2 L^2 X^2 \Gamma} {\rho}(1 + \log T) + \frac{\rho}{2} T \cdot B 
\end{eqnarray*}


\end{proof}

\begin{proof}[Proof of Lemma \ref{lemma:shalom1}]
First, notice that the norms of the gradients of the loss functions $\ell$ can be bounded by 
\begin{eqnarray*}
	\| \nabla \ell ( \vv_t ^\top \phi_\gamma(\xx^t_{\oo_t} ) , y_t ) \|  =  | \ell'( \vv_t^\top \phi_\gamma(\xx^t_{\oo^t}) , y_t) | \cdot \| \phi_\gamma(\xx^t_{\oo^t}) \| \leq L X \sqrt{ \Gamma} 
\end{eqnarray*}
where the last inequality follows from the Lipschitz property of $\ell$ and the fact that $\phi_\gamma(\xx^t_{\oo^t})$ is a vector in $\reals^\Gamma$, with coordinates from the vector $\xx^t$, and the bound  $\| \xx^t\|_\infty \leq X$.

Next, we prove by induction that $\| \vv_t\| \leq L X \sqrt{\Gamma}$.  
For $t=0$ we have $\vv_1 = 0$. 
Equation \eqref{eqn:shalom2} implies that $\vv_{t+1}$ is a convex combination of two vectors:
\begin{eqnarray*}
\| \vv_{t+1} \| & = \| (1- \eta_t C ) \vv_{t}  -  \eta_t \ell'(\vv_t^\top \phi_\gamma(\xx_{\oo_t}^t)) \phi_\gamma(\xx^t_{\oo^t})\| \\
& \leq \max \left\{  C \| \vv_{t} \| \ , \ \| \nabla \ell( \vv_t^\top \phi_\gamma(\xx_{\oo_t}^t) ) \|  \right\} \\
& \leq \max \left\{  C LX \sqrt{\Gamma}  \ , \ \| \nabla \ell( \vv_t^\top \phi_\gamma(\xx_{\oo_t}^t) ) \|  \right\} & \mbox{induction hypothesis}\\
& \leq \max \left\{  C LX \sqrt{\Gamma}  \ , \ LX \sqrt{\Gamma} \right\} & \mbox{above bound on $\nabla \ell$ }\\
& \leq LX \sqrt{\Gamma} & \mbox{$C < 1 $}
\end{eqnarray*}
We can now conclude with the lemma,  by definition of $\tilde{\ell}_t$
$$ \| \nabla \tilde{\ell}_t(\vv_t) \| \leq \| \nabla \ell(\vv_t^\top \phi_\gamma(\xx_{\oo_t}^t) ) \| + \frac{C}{2} \| \vv_t \| \leq LX \sqrt{\Gamma} + \frac{C}{2} LX \sqrt{\Gamma} \leq 2 LX \sqrt{\Gamma} $$ 

\end{proof}

\end{document}
